%%% Regular papers must be in the condensed TPLP format (template here) and not exceed 14 pages including bibliography

% new_TLP2egui.tex / guide for TLP
% v2.12, released 23-apr-2003
%   (based on JFP2egui.tex v1.01) and tlp2egui.tex
% Copyright (C) 2000,2001,2002,2003, 2012 Cambridge University Press

\NeedsTeXFormat{LaTeX2e}

\documentclass{new_tlp}
\usepackage{mathptmx}
\usepackage{algpseudocode}
\usepackage{algorithmicx}
\usepackage{listings}
\usepackage{caption}
\usepackage{url}
\usepackage[all]{xy}
\usepackage{amsmath}

% Il prossimo serve per continuare l'esempio usando sempre \begin{example}.
% Preso da https://tex.stackexchange.com/questions/28477/how-to-split-an-example-into-two-parts#28487
\usepackage{thmtools}
\renewcommand\thmcontinues[1]{Continued}

\usepackage{listings}
\lstset{
  basicstyle=\small\ttfamily,
  mathescape
}
%\usepackage{amsthm}

%%% Macros for the guide only %%%
%\hyphenation{either}
%\providecommand\AMSLaTeX{AMS\,\LaTeX}
%\newcommand\eg{\emph{e.g.}\ }
%\newcommand\etc{\emph{etc.}}
%\newcommand\bcmdtab{\noindent\bgroup\tabcolsep=0pt%
%  \begin{tabular}{@{}p{10pc}@{}p{20pc}@{}}}
%\newcommand\ecmdtab{\end{tabular}\egroup}
%\newcommand\rch[1]{$\longrightarrow\rlap{$#1$}$\hspace{1em}}
%\newcommand\lra{\ensuremath{\quad\longrightarrow\quad}}

  \title[Nonground ALP with Probabilistic Integrity Constraints]
        {Nonground Abductive Logic Programming with Probabilistic Integrity Constraints}

\author[Bellodi et al.]
{Elena Bellodi, Marco Gavanelli, Riccardo Zese, Evelina Lamma\\
Department of Engineering -  University of Ferrara
\and Fabrizio Riguzzi\\
Department of Mathematics and Computer Science - 
University of Ferrara}

%  \author[A. J. Woollatt]
%         {ALISON J. WOOLLATT\\
%         Cambridge University Press, Cambridge CB2 2BS, UK\\
%         \email{texline@cambridge.org}}

\jdate{10 May 2021}
\pubyear{2021}
\pagerange{\pageref{firstpage}--\pageref{lastpage}}
\doi{S1471068401001193}

\newtheorem{definition}{Definition}
\newtheorem{theorem}{Theorem}

\newtheorem{example}{Example}

\usepackage{todonotes}

\usepackage{acronym}
\newacro{IC}{\textit{Integrity Constraint}}
\newacro{CHR}{Constraint Handling Rules}
\newacro{CLP}{Constraint Logic Programming}
\newacro{PIC}{\textit{Probabilistic Integrity Constraint}}
\newacro{BDD}{Boolean Decision Diagram}
\newacro{LP}{Logic Programming}
\newacro{ALP}{Abductive Logic Programming}

\usepackage{sciff}

\newcommand{\AbducibleSet}{{\ensuremath{\mathcal A}}}

\newcommand{\WorldSet}{{\ensuremath{{E}}}}
\newcommand{\worldset}{explanation}
\newcommand{\aworldset}{an explanation}
\newcommand{\worldsetCHR}{{\ensuremath{expl}}} % il vincolo CHR in cui inserisco il worldset in cui siamo
\newcommand{\SetWorldSet}{{\ensuremath{{\mathcal E}}}}
\newcommand{\setworldset}{set of \worldset s}

\newcommand{\stripProb}[1]{{\ensuremath{#1}|_{\not{p}}}} % Da un insieme di IC probabilistici produce i corrispondenti non probabilistici
\newcommand{\taut}[1]{{\ensuremath{\top(#1)}}} % prende un insieme di IC e li rende tautologici, mettendo la testa vera

\newcommand{\World}{{\ensuremath{\mathcal W}}}
\newcommand{\world}{world}
\newcommand{\SetWorld}{{\ensuremath{{\mathcal W}_{T}}}}

\newcommand{\ic}{{\ensuremath{ic}}}  % Un singolo integrity constraint
\newcommand{\icset}{{\ensuremath{\textbf{IC}}}}  % Un insieme di integrity constraint
\newcommand{\icsetnonprob}{{\ensuremath{\icset^{np}}}}  % Un insieme di non-probabilistic integrity constraints
\newcommand{\icsetprob}{{\ensuremath{\icset^p}}}  % Un insieme di probabilistic integrity constraints

\newcommand{\fabd}[1]{{\ensuremath{\texttt{\textit{#1}}}}}% funtore per gli abducibili in modalita` listing
\newcommand{\cabd}[2]{{\ensuremath{\texttt{\textit{#1}#2}}}}% Stile per gli abducibili in modalita` listing

\begin{document}
\nocite{*}% includes all entries of BibTeX database into the list of references.

\label{firstpage}

\maketitle

  \begin{abstract}

Uncertain information is being taken into account in an increasing number
of application fields. In the meantime, abduction has been proved a
powerful tool for handling hypothetical reasoning and incomplete
knowledge.
Probabilistic logical models are a suitable framework to handle uncertain
information, and in the last decade many probabilistic logical languages
have been proposed, as well as inference and learning systems for them.
In the realm of Abductive Logic Programming (ALP), a variety of proof
procedures have been defined as
well. %  \cite{ALP_Handbook}.
In this paper, we consider a richer logic language, coping with
probabilistic abduction with  variables. In particular, we
consider an ALP program enriched with 
%(forward)  %%MG Questo i revisori di ASP ce l'hanno sempre criticato
integrity constraints \textit{\`{a} la}
IFF, possibly annotated with a probability value. 
%Such probabilistic  abductive integrity constraints, called Probabilistic Constraint Logic Theories,  may be also learned  by PASCAL (``ProbAbiliStic inductive ConstrAint Logic''), a system that learns both the structure and the parameters of  probabilistic integrity constraints from interpretations. %,  defined
%in  \cite{RigBelZesAlbLam20-ML-IJ}.
We first present the overall abductive language, and its semantics
according to the Distribution Semantics. % \cite{DBLP:conf/iclp/Sato95}. 
We then introduce a proof procedure, obtained by extending one previously presented, % \cite{sciff-tocl},
and prove its soundness and completeness.
This paper is under consideration for acceptance in TPLP.

%------------------------ POSSIBILE VARIANTE---------------------\\
%Uncertain information is being taken into account in an increasing number of application fields. 
%In the meantime, abduction has been proved a powerful tool for handling hypothetical reasoning, and 
%incomplete knowledge.
%Probabilistic logical models are a suitable framework to handle uncertain information, and in the last decade 
%many probabilistic logical languages have been proposed, as well as inference and learning systems for them. 
%One of the most recent is represented by Probabilistic Constraint Logic Theories (PCLTs), sets of  `integrity constraints" annotated with a probability.
%
%At the same time, in the realm of Abductive Logic Programming (ALP), a variety of proof procedures has been defined (Kakas et al. 1998).
%\sciff\ (significato acronimo, che non ho trovato: "S...C.. IFF")  is an ALP proof-procedure for reasoning with expectations in dynamic environments (Alberti et al. 2008).
%In this paper, we propose an integration among PCLTs and SCIFF, called \sciffprob, in order to perform probabilistic abduction with (existential) variables. To this end, we consider nonground ALP programs enriched with (forward) integrity constraints \textit{\`{a} la} IFF, possibly annotated with a probability value. 
%We present the declarative and operational semantics of the \sciffprob\ language,  the soundness results and we demonstrate its possible application to compute a probabilistic abductive answer in an uncertain domain.

  \end{abstract}

  \begin{keywords}
Abduction, Integrity Constraints, Distribution Semantics, Probabilistic \sciff
  \end{keywords}

%\tableofcontents

\section{Introduction}

Reasoning in uncertain domains is a common task for humans, 
and the human brain also has the capability to explore different scenarios by considering a variety of possible hypotheses, in order to take a decision. The ability of the human brain, and human expertise in specific domains, meant that humans were not replaceable by a machine in their reasoning tasks, so far. Nonetheless, in the last decade, the huge increase of available data and knowledge  in many domains (e.g., in medicine, science, physics, etc.), often in a form that can be processed automatically, strongly pushes towards forms of automatic reasoning able to cope with uncertainty, probabilities and hypotheses, also in order to have reasoning systems facing humans, and to achieve a more reproducible (and verifiable) behaviour. This is definitively  a strong commitment for a trustworthy Artificial Intelligence. 
\ac{LP} is a powerful class of languages to be a candidate for this purpose. The language itself is human-readable, and knowledge expressed in this class of languages can be validated by humans.
Standard LP syntax, and LP-based reasoning is the base for a variety of more expressive languages, and proof procedures.
In particular, Probabilistic Logic Programming (PLP)~\cite{PILP} %,Rig18-BK} 
languages are simple yet powerful enough to
represent different scenarios~\cite{azzolini2019studying,NguRig17-IMAKE-BC}. Several of these languages are based on the \textit{distribution semantics}~\cite{DBLP:conf/iclp/Sato95}, such as PRISM~\cite{DBLP:conf/iclp/Sato95}, LPAD~\cite{VenVer04-ICLP04-IC}, and ProbLog~\cite{DBLP:conf/ijcai/RaedtKT07}.

In the meantime, \ac{ALP}~\cite{ALP_Handbook} has been proven very effective for hypothetical reasoning and for formalizing a variety of domains and applications, ranging from diagnosis to 
societies of agents and accountable protocols for
multi-agent systems, commitments and normative systems, %\cite{DBLP:journals/fuin/AlbertiGLRSZ20}, 
% \cite{TOIT_Accountability18},
 web service choreographies. % \cite{PPDP06}. 
%and \ ontologies
%\cite{JURISIN2015-FE,FI_CILC15_GavanelliLRBZC18}.
% In addition, different ALP proof procedures have been proposed to represent ontologies, e.g., exploiting \DatalogPM~\cite{FI_CILC15_GavanelliLRBZC18}. %,JURISIN2015-FE,GavLamRig15-CILC15-NC},  %\cite{GavLamRig15-ICLP-IC}
\ac{ALP} is based on a declarative (model-theoretic) semantics and equipped with an operational semantics
in terms of a proof-procedure.
The IFF proof-procedure  was proposed by \citeN{IFF} to support abductive reasoning
also in presence of non-ground abducible literals.

In this paper we consider a richer logic language, coping with \textit{probabilistic abduction with %(existential) 
variables}. In particular, we consider an ALP program featuring also %(forward) 
\acp{IC}
similar to those offered by IFF, extended by the possibility of annotating them  with a probability value, that makes it possible to handle uncertainty of real world domains. 
Probabilistic integrity constraints were defined by  \citeN{RigBelZesAlbLam20-ML-IJ}: programs containing such constraints are called Probabilistic Constraint Logic Theories (PCLTs) and may be  learned  directly from data by means of  PASCAL (``ProbAbiliStic inductive ConstrAint Logic''), a system that learns both their structure and  parameters  from interpretations.
However, a system able to reason about these integrity constraints is still missing.
Consider the following example.

\begin{example}[label=exa:running]
Several years ago, a murder in Italy captured the attention of the population: a woman was murdered, and the main indicted person was her husband.
The collected evidence included the following facts:
the woman was killed in the house where she lived with her husband ({\tt house1});
a pillow stained with the blood of the victim was found in another house ({\tt house2}) some hundreds of km away;
the husband had the keys of this second house.\\
%The husband pleaded not guilty, and suggested that some other person could have entered his house, killed the woman and carried the pillow to his father's house.
We can represent the facts listed above in the following knowledge base:
\begin{lstlisting}
has_keys(husband,house1).
has_keys(husband,house2).
\end{lstlisting}
The goal is to find the murderer $M$, i.e., the person who entered both houses and killed the victim:
%%$G$ =  $\cabd{enter}{(M,house1)}$,$\cabd{killed}{(M,woman)}$,$\cabd{enter}{(M,house2)}$.
\begin{lstlisting}
$G =  \cabd{enter}{(M,house1)},\cabd{killed}{(M,woman)},\cabd{enter}{(M,house2)}.$
\end{lstlisting}
Predicates \cabd{killed} and \cabd{enter} are not known in the knowledge base, and they 
must be hypothesized: in \ac{ALP} they are considered abducibles (in this paper, abducibles are in \fabd{italic}).
%This problem calls for non-ground abduction, since the murderer $M$ is unknown.
Notice that this problem requires non-ground abduction, since the murderer $M$ is unknown, and it is not even possible to list all the possible murderers.

The relationship between having the keys and entering a house can be stated through an integrity constraint
%\begin{lstlisting}
%
%\end{lstlisting}
\begin{equation}
\label{eq:exampleMurder_IC_NonProb}
\tag{$ic_1$}
\cabd{enter}{(P,H)} \rightarrow has\_keys(P,H).
\end{equation}
saying that if a person {\tt P} enters a house {\tt H},  (s)he must have the keys.

However, the information encoded in $ic_1$ is not 100\% sure: a person could also enter the house without having the keys, e.g., by breaking a window or
picking the lock of the door.
The encoding would be more faithful to reality if a probability was associated to \ref{eq:exampleMurder_IC_NonProb}.
The probability that \ref{eq:exampleMurder_IC_NonProb} does not hold would be quite low, since
entering with the keys is much easier than with unlawful methods,
%scenarios involving to enter the house have lower credibility, as , 
in which the intruder could be noticed and arrested.

%We then introduce a \ac{PIC} saying that if a person \verb|P| enters a house \verb|H|,  (s)he must have the keys.
%However, this statement is uncertain, because it might be the case that an unauthorized person  is able to enter a house also without having the keys, but with a lower probability, say 0.3:
%\begin{lstlisting}
%$ic_1$ = 0.7::$\cabd{enter}{(P,H)}$ $\rightarrow$ has_keys(P,H).
%\end{lstlisting}
\end{example}

Most previous works proposing probabilistic abductive logic programming have considered only abduction of ground atoms~\cite{DBLP:journals/ai/Poole93,arvanitis2006abduction,DBLP:conf/ijcai/InoueSIKN09,kate2009rj,conf/ijcai/Raghavan11,TurliucEtAl2013,DBLP:conf/cilc/RotellaF13}.
To the best of our knowledge, only  \citeN{DBLP:series/lncs/Christiansen08} focused on probabilistic non-ground abduction,
and this clearly extends the expressiveness of the language, as well as the answer capabilities of the proof-procedure:
as a matter of fact, non-ground abduction can provide answers in which not all parts of the answer are completely defined,
and in which possible hypotheses, for some given evidence, are assumed without complete knowledge. This more closely resembles
the capabilities of the human brain, that is able to hypothesize the existence of some action, force or individual causing an effect
even without complete knowledge of whom, or which force is responsible.
On the other hand, the proof-procedure by \citeN{DBLP:series/lncs/Christiansen08} considers only a limited form of negation and
assigns probabilities only to abducibles.
%In our work, instead, probabilities are attached to integrity constraints, which can be reduced to probabilistic abducibles, thus providing a clear extension of the work by \citeN{DBLP:series/lncs/Christiansen08}.
%{\color{red} 
In our work, instead, we extend ALP to give the possibility to probabilistically annotate integrity constraints to enrich the standard semantics. This leads to a probabilistic ALP reasoning system that clearly improves on existing systems such as the one by Christiansen, since annotating ICs also allows one to obtain the same effect of adding probabilities to abducibles. Moreover,   machine learning systems already exist that learn probabilistic integrity constraints from data in the same form proposed in this paper, such as the aforementioned PASCAL, however,  no abductive logic programming reasoning system existed before to exploit the learned constraints: in this paper we fill this gap.
%}

The paper is organized as follows. Sect. \ref{sec:bg} introduces information about the IFF semantics necessary for understanding Sect. \ref{sec:iffprob} and \ref{sec:iffprob2} that discuss language, syntax, and semantics of the proposed probabilistic non-ground abduction proof-procedure, called \sciffprob. Sect. \ref{sec:sound-compl} proves soundness and completeness of \sciffprob, while Sect. \ref{sec:impl} describes its  implementation and 
Sect.~\ref{sec:app_ex} discusses some real world application examples. Sect. \ref{sec:exp} shows preliminary scalability tests. Sect.~\ref{sec:rw} presents related work and Sect.~\ref{sec:conc} concludes the paper.

\section{Background}
\label{sec:bg}

\paragraph{\textbf{\sciff\ Declarative Semantics.}}

%\mg{Definizione di programma, con insieme di ic probabilistici \icsetprob\ e non  probabilistici \icsetnonprob\ che formano l'insieme \icset}
%\rz{Vanno definiti gli abducible, c'erano già gli ABD nella versione precedente?}
%\mg{Parlerei semmai della IFF: \cite{IFF}}

A \sciff\ \cite{IFF} program is a triple $\triple{\SOKB}{\icset}{\AbducibleSet}$.
$\AbducibleSet$ is a set of \emph{abducible predicates}, or simply abducibles. 
An abducible is a predicate about which it is possible to make assumptions, such as about its truth. An abducible atom is an atom built on an abducible predicate.
In this work, abduced literals can contain variables, that are implicitly existentially quantified.
\SOKB\ is a
set of logic programming clauses of the form
$$h \leftarrow b_1,\dots,b_m$$
%\label{ans:default-negation}
where $m\geq 0$, and each $b_i$ with $1\leq i\leq m$ is 
%either an atom, a default negated atom, or an abducible, 
a literal (i.e., an atom or its negation), %%MG in questo modo mettiamo tutto: abducibili, atomi positivi e letterali negativi
while $h$ is an atom that cannot be built on a predicate in $\AbducibleSet$. $h$ is called \emph{head}, while $b_1,\dots,b_m$ is called the \emph{body} of the clause.
%\label{ans:abducibles-in-programs}
%other abducibles, but not events, 
%in their bodies. 
%$\SOKB$ is used to express domain specific knowledge.

\icset\ is a set of implications, called 
\acfp{IC}.
% which implicitly define the expected behaviour of the interacting parties. 
Each $\ic\in \icset$ has the form 
%$$b_1\wedge\dots\wedge b_n \then H_i\vee\dots\vee H_k$$
$$ b_1,\dots,b_n \then H_1\vee\dots\vee H_k$$
where $b_1,\dots,b_n$ is a conjunction of 
%events, expectations, abducibles and 
atoms %and abducibles,
%\mg{in realt\`a anche gli abducibili sono atomi}
%\eb{si può dire 'atoms (including abducibles),'?}
 called $Body$ of the \ac{IC} \ic, %defined in \SOKB, %%MG Possono essere tutto: abducibili e letterali definiti. Lascio solo "Literals" cosi` comprende tutto
while the disjunction $H_1\vee\dots\vee H_k$ is called the $Head$. Each $H_i$ with $1\leq i\leq k$ is a conjunction of literals $h_1,\dots,h_l$. %%MG Anche nella head possono essere tutto, l'unica cosa che non ci puo` stare sono gli H, ma qui non li usiamo.
%, and CLP constraints.

A goal $G$ is a conjunction of literals of the form 
%$$g_1\wedge\dots\wedge g_m$$
$g_1,\dots,g_m$
with $m\geq1$.

%The quantification rules for \ac{IC} follow those for expectations: if a variable occurs both in the $Body$ and in the $Head$,
%it is universally quantified with scope the whole \ac{IC}.
%Otherwise, if it occurs in the $Head$, it is existentially quantified if it occurs in at least an \ought\ literal,
%and is universally quantified if it occurs only in $\oughtnot$ literals.
%Function symbols and arbitrary nesting of terms
%are allowed; however we assume that the symbols \ought,
%\oughtnot, and \happened\ are used, in the user program, only as atom symbols, and
%not as term functors.
%We also assume that default negation is not applied to abducibles,
%including expectations.

%Thanks to their implication structure and the deontic reading of
%expectations shown in \cite{NorMAS-CMOT}, \ac{IC}s can be read as
%conditional norms \cite{DBLP:conf/birthday/AlbertiGL12}.

To define the abductive semantics, we need to recall some definitions. An \ac{IC}, clause or abducible is \emph{ground} if it does not
contain variables. A \emph{substitution} $\theta$ is an assignment of
variables to terms: $\theta=\{V_1/t_1,\ldots,V_n/t_n\}$. The
application of a substitution
	$\theta=\{V_1/t_1,\ldots,V_n/t_n\}$ to an \text{\ac{IC}} $\ic$, indicated with $\ic\theta$, is the replacement of each
variable $V_i$ appearing in $\ic$ and in $\theta$ with $t_i$. $\ic\theta$
is called an \emph{instance} of $\ic$. $\theta$ is \emph{grounding}
for $\ic$ if $\ic\theta$ is ground. The same applies to clauses and abducibles as well.

The \sciff\ abductive semantics defines 
%given a set \happenedset\ of \happened\ atoms called {\em history} and representing the actual behaviour, 
an abductive answer to a goal $G$ as a pair $(\Delta,\theta)$, where $\Delta$ is a set of abducible atoms  and $\theta$ is a substitution for the variables contained in $G$,
 such that %, together with \SOKB, entails \icset
	\begin{equation} \label{eq:entailIC}
			\begin{split}
		\SOKB  \cup \Delta &\models \icset \>\>\> \qquad \text{and}\\
		\SOKB  \cup \Delta &\models G\theta
		\end{split}
	\end{equation}
	where $\models$ is entailment according to the 3-valued completion semantics \cite{Kunen87},
%	\label{ans:3-val-completion}
	i.e., we require that every 3-valued models of the completion \cite{Cla78} of $\SOKB \cup \Delta$ is also a 3-valued model of \ac{IC}.
	In such a case, we write $ALP \models^\Delta G$.
Variables in $G$ are free, the remaining variables in $\Delta$ are existentially quantified, while
variables in \SOKB\ and \icset\ are implicitly universally quantified with scope the entire implication.
	
\citeN{RigBelZesAlbLam20-ML-IJ} introduce (although not in the context of \ac{ALP}) the concept of \ac{PIC}
 \begin{equation}\label{eq:PIC}
	p_i \ ::\ Body \then Head
\end{equation}
where  $p_i$ is a probability $\in [0,1]$.

\section{\sciffprob\ Syntax and Declarative Semantics}
\label{sec:iffprob}

\sciffprob\ programs define a probability distribution over \sciff\ programs inspired by the distribution semantics proposed in the field of  Probabilistic Logic Programming~\cite{DBLP:conf/iclp/Sato95}. 
A \sciffprob\ program, similarly to \sciff,
is a triple $ALP_p = \langle \SOKB,  \icset, \AbducibleSet\rangle$, %  composed of a set of logic programming clauses \SOKB\ and a set  \icset. 
%\eb{Marco, qui non ri-citiamo il set \textit{Abd}? Io nell'esempio sul power plant lo uso ma lo prendo da 2.1 e non da qui?}
however, in this case, the set  $\icset$ is defined as $\icsetprob\cup\icsetnonprob$, where  $\icsetprob$ is a set of \acp{PIC} as in Eq.~\ref{eq:PIC}
%\acp{PIC} were introduced in 
while $\icsetnonprob$ is a set of (non-probabilistic) \acp{IC} as in the \sciff. 
%\eb{dopo PICs metterei la citazione a PASCAL, l'articolo dove sono stati definiti}

It is worth noting that associating probabilities to \acp{IC} makes \sciffprob\ more general than other frameworks \cite{DBLP:series/lncs/Christiansen08}
in which probabilities are attached to abducibles,
since the fact that abducible $\fabd{a}$ has probability $p$ can be simply expressed by the \ac{PIC} $$ (1-p):: \fabd{a} \rightarrow \mathit{false}.$$

%The probability is to be interpreted as the strength of the IC: a probability $p_i$ means that the sum of the probabilities of the possible theories where a grounding of the constraint is present is $p_i$. 

A \sciffprob\ program $T=\triple{\SOKB}{\icset}{\AbducibleSet}$ defines a probability distribution over \sciff\ programs
called \emph{\world s} where the constraints set includes each $\ic\in\icsetnonprob$, and some instances of any $\ic\in\icsetprob$ included with probability $p_i$. 
%We assume all instances to be independent.

%The notion of \world is similar to notion of world in ProbLog
%\cite{DBLP:conf/ijcai/RaedtKT07} where a world is a normal logic
%program. However, in \cite{DBLP:conf/ijcai/RaedtKT07} the term world is used to denote both logic programs and (least) Herbrand models in literature. In general, the use of the term world is ambiguous and there is no standard \textit{de facto} for the terminology to be used. For example, Sato rarely uses the word ``world''~\cite{DBLP:conf/iclp/Sato95}. For these reasons, to avoid using ambiguous terminology we use the expression ``\worlds''.

An \emph{atomic choice} is a triple $(\ic_i,\theta_j,k)$ where $\ic_i$ is the $i$-th \ac{PIC}, $\theta_j$ is a substitution for the variables in $\ic_i$,  and $k\in \{0,1\}$ indicates whether $\ic_i\theta_j$ is chosen to be included in a \world\ ($k$ = 1) or not ($k$ = 0). 
A \emph{composite choice} $\WorldSet$ is a consistent set of atomic choices, where  \emph{consistent} means that  $\WorldSet$ does not contain two atomic choices  $(\ic_i,\theta_j,k), (\ic_i,\theta_j,m)$ with $k\neq m$, i.e.,  $(\ic_i,\theta_j,k)\in\WorldSet, (\ic_i,\theta_j,m)\in \WorldSet\Rightarrow k=m$ (only one decision for each instance of each \ac{PIC}).

The probability of a  composite choice $\WorldSet$  is 
\begin{equation*}
P(\WorldSet)=\prod_{(\ic_i,\theta_j,1)\in \WorldSet}p_i\prod_{(\ic_i,\theta_j, 0)\in \WorldSet} (1-p_i)\label{eq:pe}
\end{equation*}
where $p_i$ is the probability associated with \ac{PIC} $\ic_i$.

A \emph{selection} $\sigma$ is a \emph{total} composite choice, i.e., it contains an atomic choice for every instance up to renaming of every \ac{PIC}
of the \sciffprob\ program.  
Given a selection $\sigma$, the \world\ $\World_\sigma$ is defined as
$\World_\sigma=\{\ic_i\theta_j|(\ic_i,\theta_j,1)\in \sigma\}$. 
Let us indicate with 
$\mathcal{S}_T$ the set of all selections and with $\SetWorld$ the set of all \world s.
%We can define the conditional probability of a query $Q$ given a world as
%$P(Q|w)=1$ if $w \models Q$  and 0 otherwise.
The probability of a \world\ $\World_\sigma$  is $P(\World_\sigma)=P(\sigma)$. %=\prod_{(\ic_i,\theta_j,1)\in \sigma}p_i\prod_{(\ic_i,\theta_j,0)\in \sigma} (1-p_i)$. 
$P(\World_\sigma)$ is a probability distribution over \world s, i.e., $\sum_{\World\in \SetWorld}P(\World)=1$.

Given a \world\ \World, the conditional probability  that a goal $G$ is satisfied in the \world\ is  defined as $P(G|\World)=1$ if
%\mg{if $\World \models G$ }
%=======
%Given a \world\ \World, the conditional probability  that a goal $G$ is satisfied in the \world\ is  defined as $P(G|\World)=1$ if
%\mg{ $\World \models G$ }
%>>>>>>> 851917be415bf6266d34ea8c61bdff3d68e308f7
$\exists \Delta \langle \SOKB, \World, \AbducibleSet \rangle \models^{\Delta} G$
and 0 otherwise.
Given a goal $G$,  its probability %of the goal 
$P(G)$ can be defined by marginalizing the joint probability of the goal and the \world s:
\begin{eqnarray}
	P(G)&=&\sum_{\World\in \SetWorld}P(G,\World)=\sum_{\World\in \SetWorld} P(G|\World)P(\World)=\sum_{\World\in \SetWorld: \World \models G}P(\World)\notag
\end{eqnarray}
Therefore, the probability of a goal $G$ can be computed by summing the probability of the \world s where the goal is true.

Given a goal $G$ to solve, an \emph{\worldset} is a composite choice $\WorldSet$ for $G$ such that $G$ is entailed  by every 
\world\ of $\omega_\WorldSet$, where $\omega_\WorldSet=\{\World_\sigma|\sigma \in \mathcal{S}_T, \sigma \supseteq \WorldSet\}$
is the set of \world s compatible with $\WorldSet$.

We also define the set of \world s identified by a \setworldset\  $\SetWorldSet$ as
$\omega_\SetWorldSet=\bigcup_{\WorldSet\in \SetWorldSet}\omega_\WorldSet$.
A \setworldset\  $\SetWorldSet$ is \emph{covering} with respect to $G$ if every \world\ $\World\in\SetWorld$ in which $G$ is entailed is such that $\World\in\omega_\SetWorldSet$.
Two \worldset s $\WorldSet_1$ and $\WorldSet_2$ are \emph{incompatible} if their union is inconsistent, i.e., given a \ac{PIC} \ic\ and a substitution $\theta$, the \worldset s $\WorldSet_1=\{(\ic,\theta,1)\}$ and $\WorldSet_2=\{(\ic,\theta,0)\}$ are incompatible. A set \SetWorldSet\ is pairwise incompatible if for all $\WorldSet_1\in\SetWorldSet$, $\WorldSet_2\in\SetWorldSet$, $\WorldSet_1\neq\WorldSet_2$ implies $\WorldSet_1$ and $\WorldSet_2$ are incompatible.
The probability of a pairwise incompatible \setworldset\ \SetWorldSet\ is defined as $P(\SetWorldSet)=\sum_{\WorldSet\in \SetWorldSet}P(\WorldSet)$. %So, $P(G)=P(\SetWorldSet)$.

\begin{definition}[Probabilistic Abductive Answer]
	%Given an \explanation\ \WorldSet, 
	%each $\ic_i$ contained in an atomic choice in $\WorldSet$ has a probability $p_i$, so 
	%$$P(\WorldSet) = \prod_{(ic_i,\theta_j,1) \in \WorldSet} p_i \cdot \prod_{(\ic_i,\theta_j,0) \in \WorldSet} (1-p_i)$$
	%\mg{Questa formula c'era gi\`a prima}
%	its probability $P(E)$ can be computed as reported in Eq \ref{eq:pe}.
%	its probability  is 
%	\begin{equation}
%		P(\WorldSet)=\prod_{(\ic_i,\theta_j,1)\in \WorldSet}p_i\prod_{(\ic_i,\theta_j, 0)\in \WorldSet} (1-p_i)\notag
%	\end{equation}
%	where $p_i$ is the probability associated with \ac{PIC} $\ic_i$.\\
Given an $ALP_p$,
	$(\Delta,\theta)$ is a {\em probabilistic abductive answer} for a goal $G$ in the explanation $\WorldSet$ if 
	$$
	\begin{array}{rrcl}
		%	\exists X : \Delta = \{ p(X) \} \land  \\
&		\SOKB \cup \Delta & \models & G\theta \\
&		\SOKB \cup \Delta & \models & \icsetnonprob\\
%		\forall ic_i\theta_j | (ic_i,\theta_j,1) \in \WorldSet, KB \cup \Delta \models ic_i\theta_j \\
		\forall (ic_i,\theta_j,1) \in \WorldSet, & \SOKB \cup \Delta & \models & ic_i\theta_j \\
	\end{array}
	$$
%\mg{
%%$$\forall ic_i \in \icsetprob \exists \theta_j \mbox{s.t.} il body e` true} \Rightarrow (ic_i,\theta_j,0) \in  \WorldSet$$
%%
%$$\forall \theta_j : ic_i \in \icsetprob, (ic_i,\theta_j,0) \not\in \WorldSet, KB \cup \Delta \models ic_i \theta_j
%$$
%and \WorldSet\ is consistent.
%
%NOTA CHE QUESTA INCLUDE LA PRECEDENTE: sono soddisfatti tutti gli IC che non ci sono in $E^-$, quindi anche quelli in $E^+$
%}
	In such a case, we write $ALP_p \models^\Delta_\WorldSet G$.
\end{definition}
%\eb{Forse nell'articolo non è chiaro (oppure non l'ho trovato) che se prendo (ic1,$\theta_j$,1) $\in$ E+, allora lo includo nel mondo con probabilità pi (vincolo 'added'); se ho (ic1,$\theta_j$,0) $\in$ E- allora lo escludo ('removed') dal mondo con prob 1-pi. Però i mondi E+ ed E- vengono introdotti nella semantica operazionale (SEZ 4), cioè DOPO questa definizione.}
%\rz{Elena, la parte (ic,$\theta$,k) che viene aggiunto o meno al mondo è detto la pagina precedente quando si parla di atomic choice. Qua ancora di E+ e E- non si parla e infatti non vengono usati. Attenzione che E+ e E- non sono mondi, ma sono gli ic rimossi/aggiunti definiti in una spiegazione}

\begin{example}[continues=exa:running]
We are now able to refine the previous example by associating a probability to the \ac{IC}
saying that if a person {\tt P} enters a house {\tt H},  (s)he must have the keys: such statement
is uncertain, because it might be the case that an unauthorized person  is able to enter a house also without having the keys, although with a lower probability, say 0.3:
%\begin{lstlisting}
%$ic_1'$ = 0.7::$\cabd{enter}{(P,H)}$ $\rightarrow$ has_keys(P,H).
%\end{lstlisting}
\begin{equation}
\label{eq:exampleMurder_IC_Prob}
\tag{{\ensuremath{ic_1'}}}
0.7:: \cabd{enter}{(P,H)} \rightarrow has\_keys(P,H).
\end{equation}

\end{example}

\section{\sciffprob\ Operational Semantics}
\label{sec:iffprob2}

\paragraph{\textbf{\sciff\ Operational Semantics.}}
Before defining the \sciffprob\ operational semantics, we recap the IFF operational semantics \cite{IFF}.

The following is the subset of \sciff\ transitions that are relevant for this work, in a proof-theory style notation.
In the following list, $\fabd{a}$ is an abducible atom, while $p$ and $q$ represent atoms of either abducible or defined predicates, and
$N$ is a set of arguments.
% Proof-theory style l'ha suggerito Evelina
%
\begin{itemize}
\item propagation
$\qquad\frac{\fabd{a}(N) \qquad \fabd{a}(N'), B \rightarrow H}{N=N',B \rightarrow H}$ 
\item unfolding
{\footnotesize\begin{align*}
& \frac{p(N) \qquad p(N') \leftarrow B}{N=N', B}
&& 
\frac{p(N) \rightarrow H \qquad 
 \begin{array}{l}
	p(N') \leftarrow B' \\
	p(N'') \leftarrow B'' \\
	\dots \\
	p(N^k) \leftarrow B^k \\
 \end{array}
}{\begin{array}{l}
	N=N', B' \rightarrow H \\
	N=N'', B'' \rightarrow H \\
	\dots \\
	N=N^k, B^k \rightarrow H \\
 \end{array}
}
\end{align*}}
\item case analysis
%$$\frac{(N=N', B) \rightarrow H}{N=N'\quad B \rightarrow H \qquad \lor \qquad N \neq N'}$$
%\mg{DUBBIO: Non \`e che va scritto con due regole:
{\footnotesize\begin{align*}
\frac{(N=N', B) \rightarrow H}{(N=N'), \quad (B \rightarrow H) } & \qquad
\frac{(N=N', B) \rightarrow H}{ N \neq N'}
\end{align*}}

%{\footnotesize$$\frac{(N=N', B) \rightarrow H}{N=N'\quad B \rightarrow H \qquad \lor \qquad N \neq N'}$$}
%%\mg{DUBBIO: Non \`e che va scritto con due regole:
%%$$\frac{(N=N', B) \rightarrow H}{(N=N'), \quad (B \rightarrow H) }$$
%%$$\frac{(N=N', B) \rightarrow H}{ N \neq N'}$$
%%}
%%\el{Forse si devono usare due simboli diversi per l'= sopra e quello sotto? e.g., 'unify'?
%%Controllare come scrivono Fung e Kowalski}
%%%MG: Usano lo stesso simbolo.

\item equality rewriting
%$$\begin{array}{c@{\qquad}c}
%\frac{[\exists E][\forall A] A=E}{\theta=\{A/E\}} & \frac{[\exists E][\forall A] A\neq E}{false}\\
%\frac{[\exists E_1][\exists E_2] E_1=E_2}{\theta=\{E_1/E_2\}} \\
%\frac{X=t \qquad \mbox{$t$ does not contain $X$}}{\theta=\{X/t\})}\\
%\frac{X=t \qquad \mbox{$t$ contains $X$}}{false} & \frac{X \neq t \qquad \mbox{$t$ contains $X$}}{true}\\
%\frac{p(t_1,\dots,t_n)=p(s_1,\dots,s_n)}{t_1=s1, \dots, t_n=s_n} & \frac{p(t_1,\dots,t_n) \neq p(s_1,\dots,s_n)}{t_1 \neq s1 \lor \dots \lor t_n=s_n}\\
%\frac{p(t_1,\dots,t_n)=q(s_1,\dots,s_m) \mbox{ where } p\neq q \lor n \neq m}{false} & 
%\frac{p(t_1,\dots,t_n) \neq q(s_1,\dots,s_m) \mbox{ where } p\neq q \lor n \neq m}{true}\\
%\end{array}
%$$
{\footnotesize\begin{align*}
&\frac{[\exists E][\forall A] A=E}{\theta=\{A/E\}} && \frac{[\exists E][\forall A] A\neq E}{\false}\\
&\frac{[\exists E_1][\exists E_2] E_1=E_2}{\theta=\{E_1/E_2\}} &
&\frac{X=t \qquad \mbox{$t$ does not contain $X$}}{\theta=\{X/t\}}\\
&\frac{X=t \qquad \mbox{$t$ contains $X$}}{\false} && \frac{X \neq t \qquad \mbox{$t$ contains $X$}}{true}\\
&\frac{p(t_1,\dots,t_n)=p(s_1,\dots,s_n)}{t_1=s1, \dots, t_n=s_n} && \frac{p(t_1,\dots,t_n) \neq p(s_1,\dots,s_n)}{t_1 \neq s1 \lor \dots \lor t_n\neq s_n}
\end{align*}
\begin{align*}
& \frac{p(t_1,\dots,t_n)=q(s_1,\dots,s_m) \mbox{ where } p\neq q \lor n \neq m}{\false} &&
\frac{p(t_1,\dots,t_n) \neq q(s_1,\dots,s_m) \mbox{ where } p\neq q \lor n \neq m}{true}
\end{align*}}
where the  substitution $\theta$ is added to the child node. \label{ans:substitution} In case two variables with different quantifiers are unified, the new variable is existentially quantified.
\item logical simplifications

%$$\frac{true \rightarrow A}{A}$$
%$$\frac{false \rightarrow A}{true}$$
%$$\frac{true \land A}{A}$$
%$$\frac{false \land A}{false}$$
%$$\frac{true \lor A}{true}$$
%$$\frac{false \lor A}{A}$$
{\footnotesize\begin{equation}
\frac{true \rightarrow A}{A}
\label{eq:iff_logical_equivalence}
\end{equation}
}

$$\begin{array}{ccccc}
%\begin{align*}
\frac{\false \rightarrow A}{true} &
\frac{true \land A}{A} &
\frac{\false \land A}{\false} &
\frac{true \lor A}{true} &
\frac{\false \lor A}{A}
%\end{align*}
\end{array}$$

\item factoring
$$\begin{array}{cc}
\frac{\fabd{a}(X) \quad \fabd{a}(Y)}{X=Y} &
\frac{\fabd{a}(X) \quad \fabd{a}(Y)}{X\neq Y} 
\end{array}$$
\end{itemize}

\paragraph{\textbf{\sciffprob operational semantics.}}

We provide an extended version of the \sciff\ operational semantics capable to deal with probabilistic integrity constraints.
%In the following, we sometimes represent an \worldset\ \WorldSet\ as $\WorldSet^+ \cup \WorldSet^-$ 
%where $\WorldSet^+ = \{ (\ic_i,\theta,1)\in\WorldSet \}$ and  $\WorldSet^- = \{ (\ic_i,\theta,0)\in\WorldSet \}$.
%	The substitution $\theta=\{X/Y\}$ such that
%	\begin{itemize}
%		\item $X$ occurs in the body of at least one \ac{IC} in $\WorldSet^+$ or $\WorldSet^-$
%		\item $Y$ is either a constant, or a variable that occurs in one abduced atom or in the goal
%		%\mg{o in un predicato???} No, direi di no: noi nella semantica operazionale aggiungiamo degli abducibili nel body.
%		\mg{Qui dovrei chiaramente considerare il caso di un termine nonground. Come fa ad essere un termine nonground?
%			- perche' e` in un abducibile
%			- perche' e` in un predicato.
%			Il caso del predicato viene unfolded, quindi ottengo un "nuovo" ic; va spiegato.
%		}
%		\item $\theta$ involves all the variables in the body of the \ac{IC} in \WorldSet; it does not necessarily make them ground. 
%		
%		Notice a single \ac{IC} may appear more than once in \worldset, with different \groundings.
%		%nevertheless  with different 
%	\end{itemize}
We decorate each probabilistic integrity constraint $ic$ with its original version and  a substitution, initially empty, $\theta_\emptyset$:
$$ic^{ic\theta_\emptyset}$$

The main transition that needs to be updated is the logical equivalence $\frac{\true \rightarrow A}{A}$ (Eq~\ref{eq:iff_logical_equivalence}).
Such a rule continues to exist for non-probabilistic ICs, while for probabilistic ones it is replaced by the following two rules:
%depending on whether the current \worldset\ already contains the integrity constraint that is being applied or not, and whether
%such integrity constraint was assumed to hold, in the \worldset\ or not.

%$$\frac{\left(p::true \rightarrow A \right)^{ic} \qquad \theta \qquad (\WorldSet^+,\WorldSet^-) \qquad ic\theta \in \WorldSet^+}{
%	(A , (\WorldSet^+,\WorldSet^-))}
%$$
%\mg{che poi uno si chiede: se questo ic l'hai gi\`a propagato con la stessa \theta, perch\'e lo ripropaghi?}

%\begin{equation}
%\frac{\left(p::true \rightarrow A \right)^{ic_i\theta_j} \qquad  (\WorldSet^+,\WorldSet^-) \qquad ic_i\theta_j \not\in  \WorldSet^-}{
%	(\WorldSet^+,\WorldSet^- \cup \{ic_i\theta_j\}) \qquad \bigvee \qquad (A , (\WorldSet^+ \cup \{ic_i\theta_j\},\WorldSet^-))}
%\label{eq:trans:prob_logical_equivalence}
%\end{equation}

%% VERSIONE CON WORLDSET+ E WORLDSET-
%\begin{eqnarray}
%%\begin{equation}
%\frac{\left(p::true \rightarrow A \right)^{ic_i\theta_j} \qquad  (\WorldSet^+,\WorldSet^-) \qquad ic_i\theta_j \not\in  \WorldSet^-}{
%	 A , \qquad (\WorldSet^+ \cup \{ic_i\theta_j\},\WorldSet^-)}
%\label{eq:trans:prob_logical_equivalence_add}
%%\end{equation}
%\\
%%\begin{equation}
%\frac{\left(p::true \rightarrow A \right)^{ic_i\theta_j} \qquad  (\WorldSet^+,\WorldSet^-) \qquad ic_i\theta_j \not\in  \WorldSet^-}{
%	(\WorldSet^+,\WorldSet^- \cup \{ic_i\theta_j\})}
%\label{eq:trans:prob_logical_equivalence_remove}
%%\end{equation}
%\end{eqnarray}

%% VERSIONE CON 
\begin{eqnarray}
%\begin{equation}
\frac{\left(p::true \rightarrow A \right)^{ic_i\theta_j} \qquad  \WorldSet \qquad (ic_i,\theta_j,0) \not\in  \WorldSet}{
	 A , \qquad \WorldSet \cup \{(ic_i,\theta_j,1)\}}
\label{eq:trans:prob_logical_equivalence_add}
%\end{equation}
\\
%\begin{equation}
\frac{\left(p::true \rightarrow A \right)^{ic_i\theta_j} \qquad  \WorldSet \qquad (ic_i,\theta_j,0) \not\in  \WorldSet}{
	\WorldSet \cup \{(ic_i,\theta_j,0)\})}
\label{eq:trans:prob_logical_equivalence_remove}
%\end{equation}
\end{eqnarray}
%\rz{Da Fabrizio: perché ci vuole la substituion qui? Non è più l'IC originale}
%\mg{In sostanza, ho bisogno di portarmi dietro il body e la substitution, l'IC originale non serve, \`e gi\`a nel programma, quindi \`e statico.
%Mi serve la $\theta_j$ perch\'e devo mettere in \WorldSet\ la tripla, e uno dei campi \`e la $\theta$.
%}

Intuitively, when the body of a \ac{PIC} is proven true, its consequences are not propagated in all possible \worldset s, but, instead,
two alternative \worldset s are generated: one in which the \ac{PIC} is assumed to hold and its consequences are propagated, while in the other
the \ac{PIC} is not assumed to hold and the consequence $A$ is not derived.

%\mg{Ma si scrive cos\`i? Il dubbio \`e: mentre per tutte le altre parti della proof, tutto viene solo aggiunto, per i \WorldSet\ questi vengono sostituiti,
%quindi si toglie la versione precedente e se ne aggiunge una nuova.}

The other transitions are unmodified, except for the decoration explained earlier, e.g., 
Probabilistic Propagation becomes
$$\frac{
  a(X) \qquad \left( p :: a(Y),B \rightarrow H \right)^{ic\theta}
}{
  \left( p :: X=Y,B \rightarrow H \right)^{ic\theta}
 }
$$
%\rz{Qua va aggiunta la $\theta$ alla decorazione}
%MG: Si`, perfetto come hai fatto

%Probabilistic Case Analysis:
%$$\frac{ 
%\left( p :: X=Y,B \rightarrow H \right)^{ic}
%}{
% X=Y, \left( p :: B \rightarrow H \right)^{ic} \qquad
% \bigvee \qquad
% X\neq Y, \left( p :: B \rightarrow H \right)^{ic}
% }
%$$

%\mg{Scrivendolo cos\`i \`e tutto abbastanza semplice, non c'\`e forse bisogno di mettere le altre transizioni.
%Qui la $\theta$ resta abbastanza "nascosta".
%}

\begin{definition}[Successful derivation]
Each node of a derivation has the form
$$\langle R, \Delta, PSIC, \WorldSet, \theta \rangle$$
where the goal still to be proven is partitioned into the set $R$ (Resolvent) of non-abducible literals and the set $\Delta$ of abducible ones, $PSIC$ is a set of (probabilistic and non-probabilistic) ICs that must hold, 
$\WorldSet$ is the current \worldset, and
$\theta$ is the current substitution.

A {\em derivation is successful} for a goal $G$ in an $ALP = \triple{\SOKB}{\icset}{\AbducibleSet}$ with \worldset\ %$(\WorldSet^+,\WorldSet^-)$
\WorldSet\
if it starts from a node
%
%$$\langle G , \emptyset, IC, ( \emptyset, \emptyset), \emptyset \rangle$$
$$\langle G , \emptyset, IC,  \emptyset, \emptyset \rangle$$
and terminates in a node
%
%$$\langle \emptyset, \Delta, PSIC, (\WorldSet^+,\WorldSet^-), \theta \rangle$$
$$\langle \emptyset, \Delta, PSIC, \WorldSet, \theta \rangle$$
%\mg{la $\theta$ la metto?}
%\mg{Non so se scritto cos\`i sia coerente con la proof-theory notation delle transizioni.
%Forse dovrei dire che si pu\`o passare dall'una all'altra?
%
%Data una congiunzione di tutte le parti, \`e facile distinguerle: il $\Delta$ \`e l'insieme degli abducibili,
%gli PSIC sono le implicazioni, ...}
%%
where no transition is applicable.
If such a derivation exists, we write
$$ALP \derives_{\WorldSet}^\Delta G\theta$$
\end{definition}

\begin{definition}[Computed Probability of a goal]\label{def_prob_g}
Let $\SetWorldSet = \{\WorldSet | ALP \derives_{\WorldSet}^\Delta G \}$ be the \emph{covering} pairwise incompatible set of \worldset s for the goal $G$, % such that the goal is derivable.
the computed probability of $G$ is $P(G)=P(\SetWorldSet)=\sum_{\WorldSet\in \SetWorldSet}P(\WorldSet)$.

\end{definition}

\section{Soundness and Completeness}
\label{sec:sound-compl}

\begin{theorem}[Soundness]
If there exists a successful derivation $ALP \derives^\Delta_\WorldSet G\theta$,
then $ALP \models^\Delta_\WorldSet G\theta$.
\end{theorem}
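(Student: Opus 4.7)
The plan is to reduce soundness of \sciffprob\ to the soundness of the underlying \sciff\ proof procedure by associating, to every successful \sciffprob\ derivation, a corresponding \sciff\ derivation over the non-probabilistic theory singled out by the computed \worldset. Concretely, given the successful derivation
$$\langle G, \emptyset, \icset, \emptyset, \emptyset\rangle \derives^{\ast} \langle \emptyset, \Delta, PSIC, \WorldSet, \theta\rangle,$$
I would define the auxiliary \sciff\ program $T_{\WorldSet} = \triple{\SOKB}{\icset_{\WorldSet}}{\AbducibleSet}$ where $\icset_{\WorldSet} = \icsetnonprob \cup \{ic_i\theta_j \mid (ic_i,\theta_j,1)\in \WorldSet\}$, i.e., the non-probabilistic ICs plus exactly those ground instances of the PICs that $\WorldSet$ selects, matching the definition of a \world\ given in Sect.~\ref{sec:iffprob}.

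The core lemma would be proved by induction on derivation length: each \sciffprob\ step can be mirrored by a \sciff\ step producing the same abductive answer $(\Delta,\theta)$ over $T_{\WorldSet}$. Transitions that merely carry the decoration $ic^{ic\theta}$ (propagation, unfolding, case analysis, equality rewriting, factoring, and the logical simplifications applied to non-probabilistic ICs) coincide with the corresponding \sciff\ transitions performed on the ground instance $ic_i\theta_j \in \icset_{\WorldSet}$ in place of the decorated PIC. The two new rules are handled by the world construction: whenever (\ref{eq:trans:prob_logical_equivalence_add}) fires and adds $(ic_i,\theta_j,1)$ to $\WorldSet$, the instance $ic_i\theta_j$ is by construction in $\icset_{\WorldSet}$, so the classical \sciff\ rule $\frac{true\rightarrow A}{A}$ can be applied to that instance and yields the very same consequence $A$ that is pushed into the resolvent; whenever (\ref{eq:trans:prob_logical_equivalence_remove}) fires and adds $(ic_i,\theta_j,0)$, the corresponding instance is absent from $\icset_{\WorldSet}$ and no mirrored step is required. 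The termination condition ``no transition applicable'' carries over to the simulated derivation, which is therefore a successful \sciff\ derivation. Applying \sciff\ soundness to it gives $\SOKB\cup\Delta\models G\theta$ and $\SOKB\cup\Delta\models \icset_{\WorldSet}$, and unfolding the definition of $\icset_{\WorldSet}$ delivers exactly the three entailment requirements in the definition of probabilistic abductive answer.

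The main technical obstacle is the substitution bookkeeping underlying the decoration mechanism. I would need an invariant stating that, at every node of the derivation, the decoration $ic_i^{ic_i\theta}$ records precisely the specialisation of the original PIC induced by the bindings accumulated so far via propagation, unfolding and equality rewriting, and that, in particular, at the moment (add) fires, $\theta_j$ is a substitution for the variables of $ic_i$ that is compatible with the eventual answer substitution $\theta$, so that $\SOKB\cup\Delta\models ic_i\theta_j$ is equivalent to $\SOKB\cup\Delta\models (ic_i\theta_j)\theta$. A secondary check is that $\WorldSet$ remains consistent throughout, which follows from the side conditions $(ic_i,\theta_j,0)\not\in \WorldSet$ (respectively $(ic_i,\theta_j,1)\not\in \WorldSet$) in the two new rules together with the fact that no other transition modifies $\WorldSet$. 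Once these two invariants are established, the reduction to \sciff\ soundness is routine.
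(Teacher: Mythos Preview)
Your proposal is correct and takes essentially the same approach as the paper: both reduce to \sciff\ soundness by observing that the new probabilistic rules affect only the \worldset\ $\WorldSet$, so that each PIC instance recorded with $k=1$ is thereafter handled exactly like a non-probabilistic \ac{IC} and hence entailed by $\SOKB\cup\Delta$, while the $k=0$ branch merely enforces consistency of $\WorldSet$. Your explicit construction of the auxiliary program $T_{\WorldSet}$ and the simulation lemma is a more structured packaging of what the paper argues informally clause-by-clause; interestingly, the paper reserves an analogous auxiliary-program construction (its $ALP_{\not p}$) for the completeness proof rather than for soundness.
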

\begin{proof}
Note that, with respect to the original \sciff\ transitions, the new transitions of \sciffprob\ only change the \worldset s, while
all remaining elements of each node of the proof tree remain as in \sciff.
In particular, all non-probabilistic integrity constraints are handled as in \sciff, that is sound % \cite{sciff-tocl},
\cite{IFF},
so for each successful derivation $ALP \derives^\Delta_\WorldSet G$, $KB \cup \Delta \models \icsetnonprob$ holds.

The only modification is probabilistic logical equivalence (Eq~\ref{eq:trans:prob_logical_equivalence_add} and Eq~\ref{eq:trans:prob_logical_equivalence_remove}), in which a new branch is added: 
while in the original \sciff\ proof-procedure,
when the condition of an \ac{IC} is satisfied, the consequence is always added to the node,
for probabilistic \acp{IC} two mutually-exclusive branches are added to the proof-tree.

In one branch, 
Eq~\ref{eq:trans:prob_logical_equivalence_add} is applied and
%probabilistic logical equivalence adds $ic\theta$ to $\WorldSet^+$, i.e., it 
adds the atomic choice $(ic,\theta,1)$ to $\WorldSet$. 
Note that no transition removes elements from %$\WorldSet^+$.
\WorldSet.
In such a branch, the consequent of $ic\theta$ is added to the resolvent and
all the following computation treats $ic\theta$ as 
a non-probabilistic integrity constraint in \sciff; this proves that 
for each $(\ic,\theta,1)$ that is added to $\WorldSet$,
$KB \cup \Delta \models ic\theta$.
%\mg{Note also that from the soundness of IFF, in a successful derivation all ICs are entailed,
%so if in a successful \sciffprob\ derivation, transition \ref{eq:trans:prob_logical_equivalence_remove}
%is not applied, then $KB \cup \Delta \models \icsetprob$.
%Otherwise, for each application of  transition \ref{eq:trans:prob_logical_equivalence_remove}
%to some $ic_i\theta_j$, $(ic_i,\theta_j,0)$ is added to \WorldSet;
%this proves that
%$\forall \theta_j : ic_i \in \icsetprob, (ic_i,\theta_j,0) \not\in \WorldSet, KB \cup \Delta \models ic_i \theta_j$
%
%E A QUESTO PUNTO SI TOGLIE IMMEDIATAMENTE SOTTO: 'In the other branch ...is added to \WorldSet'.
%}

In the other branch, 
%probabilistic logical equivalence adds $ic\theta$ to $\WorldSet^-$, 
Eq~\ref{eq:trans:prob_logical_equivalence_remove} is applied and 
 the atomic choice $(ic,\theta,0)$ is added to $\WorldSet$.
Notice that no transition can add an atomic choice $(ic,\theta,k)$ to $\WorldSet$ if $(ic,\theta,0)$ was already in $\WorldSet$ 
(the set $\WorldSet$ is always consistent).
This proves that $(ic,\theta,1)$ will not be added in the following part of the derivation to $\WorldSet$.
\end{proof}

%\todo[inline]{Poi secondo me ci andrebbe:
%1) dimostrazione che il calcolo della probabilit\`a \`e corretto. Questo dovrebbe essere facile: se usiamo la stessa formula nella semantica dichiarativa e in quella operazionale ... Per\`o forse c'\`e bisogno della completeness o della termination. Io mi aspetterei uno statement del tipo: 
%
%1a) La probabilit\`a che derivi da una sola successful derivation \`e un lower bound della probabilit\`a vera del goal. 
%
%1b) ammettendo che per un certo goal tu abbia che nessuna derivation va in loop (e qui forse bisogna basarsi sulla completeness, che cercher\`o di scrivere), allora la probabilit\`a complessiva \`e corretta.
%
%2) relazione con la distribution semantics, mostrare che quello che facciamo c'entra qualcosa?
%}

\begin{theorem}[Completeness]
%If $ALP \models^\Delta_{\WorldSet^+\cup\WorldSet^-} G$
If $ALP \models^\Delta_{\WorldSet} G\theta$
then there exists a successful derivation 
%$ALP \derives^{\Delta'}_{\WorldSet'^+\cup\WorldSet'^-} G$,
$ALP \derives^{\Delta'}_{\WorldSet'} G\theta$,
where $\Delta' \subseteq \Delta$ and %$\WorldSet'^+ \subseteq \WorldSet^+$.
$\forall(ic_i,\theta_j,1)\in \WorldSet'$, $(ic_i,\theta_j,1)\in \WorldSet$.
%\mg{$\WorldSet' \subseteq \WorldSet$}
%\mg{Nota che non diciamo niente su $\WorldSet^-$.
%Il punto \`e che nella semantica dichiarativa $\WorldSet^-$ non compare, quindi se ho che $ALP \models_{\WorldSet^+\cup\WorldSet^-} G$,
%allora modella anche con qualunque altro $\WorldSet'^-$:
%$$ALP \models_{\WorldSet^+\cup\WorldSet'^-} G$$
%proprio perch\'e l'insieme dei negativi non compare proprio.
%}
%\rz{Che convenga dirlo? Perchè $\WorldSet'^-$ appare ma non diciamo cosa sia. Eventualmente il teorema più essere modificato così
%
%If $ALP \models^\Delta_{\WorldSet} G$
%then there exists a successful derivation $ALP \derives^{\Delta'}_{\WorldSet'} G$,
%where $\Delta' \subseteq \Delta$ and $\WorldSet$ and $\WorldSet'$ are such that $\WorldSet'^+ \subseteq \WorldSet^+$.}
%\eb{Perchè la frase
%"In the following, we sometimes represent an explanation E as E+ U E- where E+ = \{...\} and E- = \{...\}"
%la scriviamo all'inizio della semantica operazionale e non nella sezione precedente? In questo teorema c'è qualcosa che non va con le varie notazioni, come già avete osservato voi. Forse spostare quella frase nella semantica dichiarativa può aiutare qui.}
\end{theorem}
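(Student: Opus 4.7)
The plan is to reduce the statement to the completeness of the underlying non-probabilistic \sciff\ proof-procedure~\cite{IFF}. Starting from the hypothesis $ALP \models^\Delta_{\WorldSet} G\theta$, I would build an auxiliary (purely non-probabilistic) \sciff\ program
\[
ALP' \;=\; \langle \SOKB,\ \icsetnonprob \cup \{ic_i\theta_j \mid (ic_i,\theta_j,1)\in \WorldSet\},\ \AbducibleSet\rangle,
\]
in which every PIC instance selected with value $1$ in $\WorldSet$ is treated as an ordinary integrity constraint. The three entailment conditions in the definition of probabilistic abductive answer translate directly into $ALP' \models^{\Delta} G\theta$ in the standard \sciff\ sense.

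Applying the completeness result for \sciff~\cite{IFF} to $ALP'$ yields a successful \sciff\ derivation $ALP' \derives^{\Delta'} G\theta$ with $\Delta' \subseteq \Delta$. I would then construct a \sciffprob\ derivation for $ALP$ that mimics this one step-by-step. Since the only transitions that differ between \sciff\ and \sciffprob\ are the two probabilistic logical-equivalence rules (Eq~\ref{eq:trans:prob_logical_equivalence_add} and Eq~\ref{eq:trans:prob_logical_equivalence_remove}), every time the body of some PIC instance $ic_i\theta_j$ becomes true during the simulation I would commit to one of them according to the following rule: pick the ``add'' branch (Eq~\ref{eq:trans:prob_logical_equivalence_add}) whenever $(ic_i,\theta_j,1)\in \WorldSet$, and otherwise pick the ``remove'' branch (Eq~\ref{eq:trans:prob_logical_equivalence_remove}).

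In the first case the consequent is propagated, exactly as the ground IC $ic_i\theta_j$ is propagated inside $ALP'$; in the second case no consequent is added, faithfully reproducing the behaviour of $ALP'$, where the instance is simply absent from the IC set. Thus every transition taken by the \sciff\ derivation on $ALP'$ is matched, the abduced set at the end is the same $\Delta' \subseteq \Delta$, and every positive atomic choice $(ic_i,\theta_j,1)$ that lands in $\WorldSet'$ was, by construction, already in $\WorldSet$, giving the two required inclusions.

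The main obstacle will be making the step-by-step correspondence rigorous. Concretely, one must argue that each firing of a PIC body in \sciffprob\ either (i) coincides with the firing of the same ground IC in $ALP'$, so that the branch-$1$ choice faithfully reproduces the \sciff\ step, or (ii) concerns a PIC instance that is absent from the IC set of $ALP'$, so that the branch-$0$ choice is equally faithful. One must also dispose of PIC instances with $(ic_i,\theta_j,1)\in\WorldSet$ whose body is never proved true during the derivation: these simply contribute nothing to $\WorldSet'$, which is harmless because the theorem only demands $\WorldSet'$'s positive choices to be included in $\WorldSet$, not the converse, and the declarative semantics is anyway satisfied vacuously for such instances.
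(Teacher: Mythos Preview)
Your reduction to \sciff\ completeness and your branching rule (apply Eq~\ref{eq:trans:prob_logical_equivalence_add} when $(ic_i,\theta_j,1)\in\WorldSet$, otherwise Eq~\ref{eq:trans:prob_logical_equivalence_remove}) are exactly the paper's strategy. The gap lies in the auxiliary program: your $ALP'$ drops the non-selected PICs altogether, and this is where the step-by-step simulation does not go through as written. In a \sciffprob\ derivation the whole of $\icsetprob$ sits in the initial $PSIC$, so every PIC---including those you intend to dismiss via Eq~\ref{eq:trans:prob_logical_equivalence_remove}---must first be driven by propagation, unfolding and case analysis until its body becomes $true$. None of those transitions have counterparts in the IFF derivation on your $ALP'$, because the corresponding constraints simply are not there; your phrase ``faithfully reproducing the behaviour of $ALP'$, where the instance is simply absent'' conceals exactly this mismatch (absence in $ALP'$ means \emph{no} transitions, whereas in \sciffprob\ the PIC is present and must be processed, with case analysis creating branches along the way). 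You flag the difficulty in your last paragraph but do not resolve it.

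The paper closes the gap with a device you are missing: its auxiliary program $ALP_{\not p}$ contains, besides the selected instances, the \emph{tautological} versions $Body \rightarrow true$ of the remaining PICs (the $\top(\cdot)$ construction). These tautologies are logically inert but force the IFF derivation to apply propagation, unfolding and case analysis to them exactly as \sciffprob\ does to the original PICs; when their body reaches $true$, IFF applies Eq~\ref{eq:iff_logical_equivalence} (yielding the harmless head $true$), and in the simulation one substitutes Eq~\ref{eq:trans:prob_logical_equivalence_remove} at that point. With this trick the IFF and \sciffprob\ derivations have the same shape, and replacing logical-equivalence steps is genuinely the only modification required---which is what makes the mimicking argument rigorous.
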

\begin{proof}
Suppose that $ALP \models^\Delta_{\WorldSet_*} G$, i.e.,  there is a probabilistic abductive answer for a goal $G$, in \aworldset\ $\WorldSet_*$.
We leverage on the completeness theorem of the IFF proof-procedure \cite{IFF},
crafting an IFF program in such a way that a \sciffprob\ derivation can be built with simple replacements of transitions from the IFF derivation.

%\rz{Qua gli $E^+$ andrebbero sostituiti con, per esempio, $\icset_{E} = \{ic_i\theta_j|(ic_i,\theta_j,\_)\in E\}$}
Let $IC(\WorldSet^+_*) = \{ic_i\theta_j|(ic_i,\theta_j,1)\in E_*\}$;
%\mg{Per la stronger semantics, verrebbe da sostituire $IC(\WorldSet^+_*)$ con $IC(\icsetprob \setminus \WorldSet^-)$, dove $IC( \WorldSet^-) = \{ic_i\theta_j|(ic_i,\theta_j,1)\in E_*\}$
%e $\stripProb{\taut{\icsetprob \setminus IC(\WorldSet^+_*)}}$ con $\WorldSet^-$.
%In questo modo risulta facile dimostrare che l'insieme $E-$ \`e sempre un sottoinsieme di $E^-_*$, mentre bisogna pensare a come fare per $E^+$.
%}
%Consider the (non-probabilistic) $ALP_{\not{p}} = \triple{\SOKB}{\stripProb{(IC(\WorldSet^+_*) \cup \taut{\icsetprob \setminus IC(\WorldSet^+_*)})} \cup \icsetnonprob }{\AbducibleSet}$
%where $\stripProb{X} = \{ ic | p::ic \in X\}$  strips the probability annotation from a set of probabilistic \acp{IC} $X$
%and $\taut{X} = \{ Body \rightarrow true | Body \rightarrow Head \in X\}$ replaces the $Head$ of a set of \acp{IC} with $true$; 
consider the (non-probabilistic) 
$$ALP_{\not{p}} = \triple{\SOKB}{\stripProb{(IC(\WorldSet^+_*) \cup \taut{\icsetprob \setminus IC(\WorldSet^+_*)})} \cup \icsetnonprob }{\AbducibleSet}$$
where $\stripProb{X} = \{ ic | p::ic \in X\}$  strips the probability annotation from a set of probabilistic \acp{IC} $X$
and $\taut{X} = \{ Body \rightarrow true | Body \rightarrow Head \in X\}$ replaces the $Head$ of a set of \acp{IC} with $true$; 
clearly the last operation
produces implications that are tautologies, and are useless from a logical viewpoint, but that help us
build  the \sciffprob\ derivation.

From the completeness theorem of the IFF \cite{IFF}, we have that there exists an IFF derivation
$ALP_{\not{p}} \derives^{\Delta'} G$ where $\Delta' \subseteq \Delta$.
We build a successful \sciffprob\ derivation
that mimics the IFF derivation,  where Eq~\ref{eq:trans:prob_logical_equivalence_add} is applied to \acp{IC} in $IC(\WorldSet^+_*)$,
and in which Eq~\ref{eq:trans:prob_logical_equivalence_remove} is applied to  \acp{IC} in its complement, $\icsetprob \setminus IC(\WorldSet^+_*)$.

In the IFF derivation, the logical equivalence Eq~\ref{eq:iff_logical_equivalence} is possibly applied to $\stripProb{\taut{\icsetprob \setminus IC(\WorldSet^+_*)}}$; we substitute each application with the application of Eq~\ref{eq:trans:prob_logical_equivalence_remove}.
Such applications have no other consequences beside adding elements 
%of $\icsetprob \setminus IC(\WorldSet^+_*)$ to the $\WorldSet^-$ set.
$(ic_i,\theta_j,0)$ to the \WorldSet\ set, where $\ic_i\theta_j \in \icsetprob \setminus IC(\WorldSet^+_*)$.

Since \sciffprob\ may also have to apply propagation,
unfolding and case analysis to the \acp{IC} in $\icsetprob \setminus IC(\WorldSet^+_*)$, adding the set
$\stripProb{\taut{\icsetprob \setminus IC(\WorldSet^+_*)}}$ to the integrity constraints of the IFF program ensures that also the IFF derivation applies such transitions when necessary.

In the IFF derivation, the logical equivalence $\frac{true \rightarrow A}{A}$ is (possibly) applied to \acp{IC} in $\stripProb{IC(\WorldSet^+_*)}$.
We substitute each application of Eq~\ref{eq:iff_logical_equivalence} with the application of Eq~\ref{eq:trans:prob_logical_equivalence_add}.
Note that Eq~\ref{eq:trans:prob_logical_equivalence_add} is applicable, because the set 
%$\WorldSet^-$ 
$\WorldSet$ 
is empty in the initial node,
and the only transition that adds
%implications 
atomic choices of the type $(ic_i,\theta_j,0)$
to \WorldSet\ is Eq~\ref{eq:trans:prob_logical_equivalence_remove}, that adds only elements of  $\icsetprob \setminus IC(\WorldSet^+_*)$.
%Consider the first application of such rule to a specific $ic_i\theta_j$:
%since it is the first application,  $ic_i\theta_j \not\in \WorldSet^+ \cup \WorldSet^-$, so Eq~\ref{eq:trans:prob_logical_equivalence} is applicable,
%and it provides two disjuncts; let us follow the second branch in which $(A , (\WorldSet^+ \cup \{ic_i\theta_j\},\WorldSet^-))$
%is obtained.
Clearly the result in the resolvent is the same as for the original IFF derivation, since both Eq~\ref{eq:iff_logical_equivalence} and 
Eq~\ref{eq:trans:prob_logical_equivalence_add} add the consequence $A$ of the implication to the resolvent.
%By construction, in each node the set $\WorldSet^+ \subseteq IC(\WorldSet^+_*)$.
By construction, the set of atomic choices $(ic_i,\theta_j,1)$ added in this way to \WorldSet\ is always a subset of $IC(\WorldSet^+_*)$.
\end{proof}

\section{Implementation}
\label{sec:impl}

\newcommand{\origsciff}{{\ensuremath{\mathcal{S}\mbox{CIFF}}}}

\subsection{Recap: CHR implementation of the \origsciff\ proof-procedure}
We implemented \sciffprob\ leveraging on the implementation of the
\origsciff\ proof-procedure. \origsciff\ \cite{sciff-tocl} is an extension of the \sciff\ proof procedure \cite{IFF}
that also features  constraints (\textit{\`a la} \ac{CLP}) and universally quantified abducibles.
In this work, we extend to the probabilistic case only the IFF sub-language and leave for future work the probabilistic extension of the other parts of \origsciff.

The \origsciff\ proof-procedure was implemented \cite{sciff_implementation} in \ac{CHR} \cite{DBLP:journals/fuin/Fruhwirth20}. \ac{CHR} is a rewriting system originally
developed to implement new \ac{CLP} constraint solvers, and then employed also as a language for a wide variety of applications.
A set of atoms are declared as CHR constraints; according to the CLP operational semantics, when a CHR constraint is selected (e.g., during SLD-resolution), it is moved to a constraint store. CHR rules transform the constraint store, hopefully simplifying it. There exist two main types of CHR rules: {\em propagation} and {\em simplification} rules.

A {\em Propagation } rule has the form
\vspace{-0.6em}
$$c_1, c_2, \dots, c_n \Rightarrow Guard | Body$$
where $c_1, \dots, c_n$ are CHR constraints, $Guard$ and $Body$ are  Prolog goals.
The meaning is that $Body$ is a logical consequence of the conjunction $c_1 \land \dots \land c_n$ provided that $Guard$ is true.
Operationally, when the set of constraints $c_1, \dots, c_n$ are in the constraint store, the guard is evaluated; in case it is true, the $Body$ is executed.

A {\em Simplification} rule has the form
\vspace{-0.6em}
$$c_1, c_2, \dots, c_n \Leftrightarrow Guard | Body$$
Its declarative reading is that $Body$ is equivalent to the conjunction $c_1 \land \dots \land c_n$ provided that  $Guard$ is true.
Operationally, when the set of constraints $c_1, \dots, c_n$ is in the store, and the guard is true, the constraints  $c_1, \dots, c_n$ are removed from the store and  the $Body$ is executed.

In the \origsciff\ implementation, 
%all the elements of the tuple in the nodes of a derivation are mapped to \ac{CHR} constraints: 
each abducible atom $\cabd{a}{(X)}$ is mapped to a CHR constraint \texttt{abd(a(X))}, and each \ac{IC} $Body \rightarrow Head$ is mapped to a CHR constraint \texttt{ic(Body,Head)}.
Each transition in the operational semantics is mapped to a \ac{CHR} rule.
For example, transition {\em propagation} is mapped to the CHR simplification rule (CHR expert readers will actually recognize it as a simpagation rule)
%\vspace{-0.6em}
$$abd(A),\ ic([abd(B)|T],H)\ \Leftrightarrow\ abd(A),\ ic([A=B|T],H),$$
and transition {\em case analysis} is mapped to
\vspace{-0.6em}
$$ic([A=B|T],H)\ \Leftrightarrow\ A=B,\ ic(T,H) \ ;\  A \neq B$$
where the semicolon is Prolog's OR and $\neq$ is a dis-unification constraint.
Logical equivalence $\frac{true \rightarrow G}{G}$ is mapped to the simplification rule
\vspace{-0.6em}
\begin{equation}
ic([],Head) \Leftrightarrow Head.
\label{eq:logical_equivalence_CHR_sciff}
\end{equation}

\subsection{Implementation of \sciffprob}

In order to implement  probabilistic reasoning, 
we add a new CHR constraint that represents the current \worldset:
\vspace{-1em}
%$$\worldsetCHR(\WorldSet^+,\WorldSet^-,P)$$
$$\worldsetCHR(\WorldSet,P)$$
means that, in the current derivation branch, the \worldset\ is $\WorldSet$, and has probability $P$.
The $\WorldSet$ parameter is  a collection (e.g., list) of triples $(ic_i,\theta_j,k)$, holding the integrity constraint $ic_i$, the substitution that made its body true, $\theta_j$, and the integer $k$ representing a Boolean value explaining whether $ic_i\theta_j$ belongs to  \WorldSet\ or not.
%{\color{red}The two $\WorldSet$ parameters are collections (e.g., lists) of integrity constraints, each holding also the substitution that made its body true: $ic_i\theta_j$.}
%\rz{Nella parte in rossi evidenzio che in $expl$ ci portiamo dietro l'intera tripla. Se vogliamo non farlo, qua dobbiamo usare la definizione di $\icset_{E}$ che ho proposto prima se viene usata. Altrimenti cambia poco dal punto di vista semantico. Basta solo aggiornare le parti in rosso.}

The CHR constraint representing \acp{IC} now requires additional information for probabilistic \acp{IC}: 
unsurprisingly the probability $P$
needs to be added to the parameters, and also the original version $ic_i$ of the integrity constraint together with the $\theta_j$ substitution that binds the variables in the body is stored. The new CHR constraint
representing a probabilistic implication
is
\vspace{-0.6em}
$$ic(Body,Head,ic_i\theta_j,P).$$

\begin{figure}
%{\footnotesize
\begin{algorithmic}[1]
\State $ic([],Head,ic_i\theta_j,P_{ic}),$\ $\worldsetCHR(\WorldSet,P_\WorldSet)$
 $\Leftrightarrow$
\State\label{line:ic_already_in_world} $( \quad (ic_i,\theta_j,\_) \in \WorldSet,$
\State $\phantom{(}\quad \worldsetCHR(\WorldSet,P_\WorldSet)$
\State\label{line:ic_not_in_world} $; \quad (ic_i,\theta_j,\_) \not\in \WorldSet,$
\State\label{line:add_ic} $\phantom{(}\quad	 ( \quad \worldsetCHR(\WorldSet\cup \{(ic_i,\theta_j,1)\},P_\WorldSet P_{ic}),$
\State $\phantom{(}\quad	 \phantom{(}\quad  Head$
\State\label{line:remove_ic} $\phantom{(}\quad	 ; \quad \worldsetCHR(\WorldSet\cup \{(ic_i,\theta_j,0)\},P_\WorldSet (1-P_{ic}))$
\State $\phantom{(}\quad )$
\State $).$
\end{algorithmic}
%}
\caption{\label{fig:implementation_logical_equivalence} Implementation of the probabilistic logical equivalence rule in CHR.}
\end{figure}

We extend the logical equivalence rule $\frac{true \rightarrow A}{A}$ according to the new operational semantics (Figure~\ref{fig:implementation_logical_equivalence}).
Such logical equivalence is applied when (due to successive applications of other transitions) the body of an integrity constraint $ic_i$ is proven true, for a given substitution $\theta_j$ of the variables in the body.
Simplification rule \ref{eq:logical_equivalence_CHR_sciff}
is extended to consider three cases in disjunction.
In the first (line~\ref{line:ic_already_in_world}), it is imposed that $ic_i\theta_j$ belongs to the current \worldset\ (i.e., a unification is imposed such that $\ic_i\theta_j$
unifies with at least one member of the \worldset, similarly to the {\tt member} predicate in standard Prolog);
 in such a case, the $ic$ has already been propagated with exactly the same substitution, so there is no need to re-propagate its consequences, nor to change the current \worldset\ and its probability.
Otherwise (line~\ref{line:ic_not_in_world}), it is imposed that $ic_i\theta_j$ does not belong to the current \worldset\ (i.e., a dis-unification constraint is imposed between $ic_i\theta_j$ and all members of the \worldset).
In such a case, two alternative branches are opened, i.e., 
one in which we consider \aworldset\ that includes the integrity constraint $ic_i\theta_j$ (line~\ref{line:add_ic}) and one in which $ic_i\theta_j$ is considered removed (line~\ref{line:remove_ic}).

%% Versione con E+, E-
%%{\footnotesize
%%\begin{algorithmic}[1]
%%\State $ic([],Head,ic_i\theta_j,P_{ic}),$
%%\State $\worldsetCHR(\WorldSet^+,\WorldSet^-,P_\WorldSet)$
%%\State $\Leftrightarrow$
%%\State $( \quad ic_i\theta_j \in \WorldSet^+ \cup \WorldSet^-,$
%%\State $\phantom{(}\quad \worldsetCHR(\WorldSet^+,\WorldSet^-,P_\WorldSet)$
%%\State $; \quad ic_i\theta_j \not\in \WorldSet^+ \cup \WorldSet^-,$
%%\State $\phantom{(}\quad	 ( \quad \worldsetCHR(\WorldSet^+\cup \{ic_i\theta_j\},\WorldSet^-,P_\WorldSet P_{ic}),$
%%\State $\phantom{(}\quad	 \phantom{(}\quad  Head$
%%\State $\phantom{(}\quad	 ; \quad \worldsetCHR(\WorldSet^+,\WorldSet^-\cup \{ic_i\theta_j\},P_\WorldSet (1-P_{ic}))$
%%\State $\phantom{(}\quad )$
%%\State $).$
%%\end{algorithmic}
%%}

\subsection{Computation of the Probability of Goals}

The \sciffprob proof-procedure returns the covering set \SetWorldSet\ of all the \worldset s $\worldsetCHR/2$ for a goal $G$. As seen in Definition~\ref{def_prob_g}, if \SetWorldSet\ is also pairwise incompatible, then $P(G)=P(\SetWorldSet)$.
However,  a covering \setworldset\ $\SetWorldSet$ for a goal $G$ is not guaranteed to be pairwise incompatible. 

%Therefore, to compute $P(G)$ it is necessary to ensure that $\SetWorldSet$ is pairwise incompatible. To do so, we can resort on the \emph{splitting algorithm}~\cite{DBLP:journals/jlp/Poole00}, which creates an equivalent \setworldset\ that is mutually incompatible. 
%Otherwise, we can resort to \emph{knowledge compilation} \cite{DBLP:journals/jair/DarwicheM02} to compile \SetWorldSet\  into a ``target language'', which is then used to compute the probability of the goal.

%In the latter case, 
Thus, we  associate a Boolean random variable $X_{ij}$ to each instance of \ac{PIC} $\ic_i\theta_j$. In this way, an atomic choice $(\ic_i,\theta_j,1)$  corresponds to $X_{ij}$ assuming value $true$.
The variables  $\mathbf{X}=\{X_{ij}|(\ic_i,\theta_j,k) \in \WorldSet, \WorldSet \in \SetWorldSet\}$ are pairwise independent and the probability that $X_{ij}$ takes value 1 is $p_i$, the probability associated with the $i$-th \ac{PIC}.

Given a covering \setworldset\ $\SetWorldSet$ for a query $G$, each \world\ where the query is true corresponds to an assignment of $\mathbf{X}$ for which the following Boolean function takes value 1:
\begin{equation}
	f_\SetWorldSet(\mathbf{X})=\bigvee_{\WorldSet\in \SetWorldSet}\bigwedge_{(\ic_i,\theta_j,1)\in \WorldSet}X_{ij}\bigwedge_{(\ic_i,\theta_j,0)\in \WorldSet}\overline{X_{ij}}
\end{equation}
Thus, we can compute the probability of $G$ by computing the probability that $f_\SetWorldSet(\mathbf{X})$ takes value 1.
This formula is in Disjunctive Normal Form (DNF) but we cannot compute $P(f_\SetWorldSet(\mathbf{X}))$ by summing the probability of each individual explanation because the different explanations may not be mutually disjoint.
To solve the problem, we can apply knowledge compilation to the propositional formula $f_\SetWorldSet(\mathbf{X})$ \cite{DBLP:journals/jair/DarwicheM02} in order to translate it into a target language that
allows the computation of the probability in polynomial time. A target language that was found to give good performances is the one of \acp{BDD}. 

A \ac{BDD} for a  function of Boolean variables is   
a rooted graph that has one level for each Boolean variable. 
A node $n$ in a \ac{BDD} has two children: one corresponding to the 1 value of the variable associated with $n$, indicated with $child_1(n)$, and one corresponding to the 0 value of the variable, indicated with $child_0(n)$. %When drawing \acp{BDD}, rather than using edge labels, the 0\--branch - the one going to  $child_0(n)$ -  is distinguished from the 1-branch by drawing it with a dashed line.
The leaves store either 0 or 1.
Given values for all the variables, a \ac{BDD} can be used to compute the value of the formula by traversing the graph starting from the root, following the edges corresponding to the variables values and returning the value associated to the leaf that is reached. 
For instance, Figure \ref{dd} shows a \ac{BDD} for the function $f(\mathbf{X})=(X_{11}\wedge X_{21})\vee (X_{12}\wedge X_{21}).$
%\begin{equation}
%	\label{funct}
%	f(\mathbf{X})=(X_{11}\wedge X_{21})\vee (X_{12}\wedge X_{21}).
%\end{equation}
%The \ac{BDD} for Example \ref{epidemy} is shown in Figure \ref{dd}.
%
\begin{figure*}
	\begin{minipage}[b]{.35\textwidth}
			$$
			\xymatrix@R=5mm@C=4mm
			{ X_{11} & &*=<25pt,15pt>[F-:<3pt>]{n_1}
				\ar@/_/@{-}[ldd] \ar@/^/@{--}[dr]\\ 
				X_{12}  & & &*=<25pt,15pt>[F-:<3pt>]{n_2} 
				\ar@/_/@{-}[dll]\ar@{--}[dd] 
				\\
				X_{21}& *=<25pt,15pt>[F-:<3pt>]{n_3}
				\ar@{-}[d] \ar@/^/@{--}[drr]  \\
				&*=<25pt,15pt>[F]{1} &&*=<25pt,15pt>[F]{0}}
			$$
			
			\captionof{figure}{\ac{BDD} for function $f(\mathbf{X})$.}% (\ref{funct}).}
			\label{dd}
	
	\end{minipage}
	\hfill
	\begin{minipage}[b]{.62\textwidth}
	\footnotesize
	\begin{algorithmic}[1]
				\Function{Prob}{$node$, $pMap$}
				\State Input: a BDD node $node$ and a (node,prob) map $pMap$
				\State Output: the probability of the formula associated to $node$
				\If{$node$ is a terminal}
				\State return $value(node)$\Comment{$value(node)$ is 0 or 1}
				\ElsIf{$node$ is in $pMap$}
				\State return $pMap[node]$ \Comment{The probability of $node$}
				\Else
				\State let $X$ be $v(node)$ \Comment{$v(node)$ is the variable associated to $node$}
				\State $P_1\gets$\Call{Prob}{$child_1(node)$}
				\State $P_0\gets$\Call{Prob}{$child_0(node)$}
				\State add $(node,P(X)\cdot P_1+(1-P(X))\cdot P_0)$ to $pMap$
				\State return $P(X)\cdot P_1+(1-P(X))\cdot P_0 $
				\EndIf
				\EndFunction
			\end{algorithmic}
			\captionof{figure}{Function that computes the probability of a %formula encoded as a 
				BDD.\label{prob}}
			%: computation of the probability of a Boolean formula encoded as a BDD with root $node$.
			%\end{scriptsize}
	\end{minipage}
\end{figure*}
A \ac{BDD} performs a Shannon expansion of the Boolean formula $f_\SetWorldSet(\mathbf{X})$, so that if $X$
%\mg{Qui le variabili si chiamano tutte $X$, $X_{ij}, \mathbf{X}$, ... si fanno un po' fatica a leggere le formule.
%Questa, visto che \`e la radice, si potrebbe chiamare $R$?}
is the variable associated to the root level of a \ac{BDD}, the formula $f_\SetWorldSet(\mathbf{X})$ can be represented as
$f_\SetWorldSet(\mathbf{X})=X\wedge f_\SetWorldSet^X(\mathbf{X})\vee \overline{X}\wedge f_\SetWorldSet^{\overline{X}}(\mathbf{X})$
where $f_\SetWorldSet^X(\mathbf{X})$ (respectively, $f_\SetWorldSet^{\overline{X}}(\mathbf{X})$) is the formula obtained by $f_\SetWorldSet(\mathbf{X})$ by setting $X$ to 1
(resp., 0). Now the two disjuncts are mutually exclusive and the probability of $f_\SetWorldSet(\mathbf{X})$ can be computed as
$P(f_\SetWorldSet(\mathbf{X}))=P(X)P( f_\SetWorldSet^X(\mathbf{X}))+(1-P(X))P(f_\SetWorldSet^{\overline{X}}(\mathbf{X}))$. 
%In other words, \acp{BDD} make the explanations mutually incompatible. 
%===SI PUÒ TOGLIERE
Figure \ref{prob} shows the function \textsc{Prob} that implements the dynamic programming algorithm of \cite{DBLP:conf/ijcai/RaedtKT07} for computing the probability of a formula encoded as a \ac{BDD}.

\section{Application examples}
\label{sec:app_ex}
We can now show how probabilities are computed in the running example; we then show
a second example taken from the literature, to show the versatility of our approach.

\begin{example}[continues=exa:running]

Here $\icsetprob=\{\ref{eq:exampleMurder_IC_Prob}\}$,  $\icsetnonprob=\emptyset$ and the set of abducible predicates is $\AbducibleSet =\{\cabd{enter}{/2}, \cabd{killed}{/2}\}$, since they are not known.

Two atoms may make  the body of \ref{eq:exampleMurder_IC_Prob} true, one stating that $\texttt{husband}$ has the keys of ${\texttt{house1}}$ and the other stating that he holds the keys of ${\texttt{house2}}$; these will correspond to the substitutions 
$\theta_1=\{\texttt{P/husband},\texttt{H/house1}\}$ 
and 
$\theta_2= \{\texttt{P/husband},\texttt{H/house2}\}$.
For each instantiation that makes the body true, exactly one of the two transitions in Eq. \ref{eq:trans:prob_logical_equivalence_add} and \ref{eq:trans:prob_logical_equivalence_remove} is applicable.
In this way, four explanations %\mg{\`E corretto il termine explanations?}
are generated in alternative branches:
$E_1 = \{(\ref{eq:exampleMurder_IC_Prob}, \theta_1,1),(\ref{eq:exampleMurder_IC_Prob},\theta_2,1)\}$ with $P(E_1)=0.7^2=0.49$,
%(i.e.,both $ic_1\theta_1$ and $ic_1\theta_2$ are included in the world) 
$E_2 = \{(\ref{eq:exampleMurder_IC_Prob},\theta_1,1),(\ref{eq:exampleMurder_IC_Prob},\theta_2,0)\}$ with $P(E_2)=0.7\cdot(1-0.7) =0.21$,
$E_3 = \{(\ref{eq:exampleMurder_IC_Prob},\theta_1,0),$ $(\ref{eq:exampleMurder_IC_Prob},\theta_2,1)\}$ with  $P(E_3)=(1-0.7)\cdot 0.7 =0.21$ 
and
$E_4 = \{(\ref{eq:exampleMurder_IC_Prob}, \texttt{\{P/X,H/house1\}},0)$, $(\ref{eq:exampleMurder_IC_Prob},\texttt{\{P/X,H/house2\}},0)\}$ with $P(E_4)=(1-0.7)^2=0.09$.

The sets of explanations where the goal is true are:
%$P(G)=1$ since there are 2 possible worlds where the goal is true,
%with probability 0.91 and 0.09, which are respectively identified by the set of explanations $\SetWorldSet_1$ and $\SetWorldSet_2$, using  the substitutions $\theta_1=\{\texttt{P/husband},\texttt{H/house1}\}$ 
%%(corresponding to $\cabd{enter}{(husband,house1)} \rightarrow \texttt{has\_keys(husband,house1).}$), 
%and 
%$\theta_2= \{\texttt{P/husband},\texttt{H/house2}\}$: %\footnote{$ic_1\theta_1=\cabd{enter}{(husband,house1)} \rightarrow \texttt{has\_keys(husband,house1).}$\\ $ic_1\theta_2=\cabd{enter}{(husband,house2)}\rightarrow \texttt{has\_keys(husband,house2).}$}: %(corresponding to $\cabd{enter}{(husband,house2)}\rightarrow \texttt{has\_keys(husband,house2).}$):
\begin{description}
\item [$\SetWorldSet_1$]$=\{E_1,E_2,E_3\}$ with probability 
 $P(\SetWorldSet_1)= \sum_{i=1}^3P(E_i) = 0.91$, probabilistic abductive answer  
{\small$\Delta = \{\cabd{enter}{(husband,house1)},\cabd{enter}{(husband,house2)},$ $\cabd{killed}{(husband,woman)}\}$ and $\theta= \{\texttt{M/husband}\}$}.
%\mg{Qui stiamo dando due valori diversi alla stessa H; dovremmo prendere un renaming. Comunque secondo me la $\theta$ che si considera nell'abductive answer \`e quella del goal, quindi dovrebbe essere $\theta = \{M/husband\}$}
 This solution (the most likely) states that the husband was the killer with a chance of 91\%. Explanation $E_1$ indicates that the goal is true (the husband was the killer) if  the husband had entered both houses since he had the keys for both.  The second and third explanations represent the fact that one instantiation of the PIC is not considered, so with probability 0.3 one can
enter a house even if (s)he does not have the keys.

\item [$\SetWorldSet_2$]$=\{E_4\}$
% {\small
 %$E_4 = \{(\ref{eq:exampleMurder_IC_Prob}, \{\texttt{P/X},\texttt{H/house1}\},0), (\ref{eq:exampleMurder_IC_Prob},\{\texttt{P/X},\texttt{H/house2}\},0)\}$
 %}
with  $P(\SetWorldSet_2)=P(E_4)=(1-0.7)^2=0.09$; note that both instances of $\ref{eq:exampleMurder_IC_Prob}$ are relaxed $(k=0)$ so the complement of 0.7 must be used. 
In this case the probabilistic abductive answer is
{\small$(\Delta,\theta) = (\{\cabd{enter}{(M,house1)},$ $\cabd{enter}{(M,house2)},$ $\cabd{killed}{(M,woman)}\},$ $
\emptyset)$}. 
%\mg{Anche qui direi che la $\theta$ \`e quella del goal, e viene l'insieme vuoto}
This solution (much less probable) states that some unknown person entered the two houses and committed the murder with a chance of 9\%.
\end{description}

\sciffprob\ provided two reasonable explanations.
In particular,  the second abductive explanation, obtained through non-ground abductive reasoning, is that some
person, unknown to the knowledge base, could have perpetrated the murder.
It is interesting to know that this is exactly the explanation suggested by the husband during the trial:
he pleaded not guilty and suggested that some other person could have entered his house, killed the woman, and carried the pillow to the second house.
Although logically possible, such explanation was considered unlikely by the judges, that sentenced the husband guilty. 
Nevertheless, as a human brain was able to hypothesize the existence of an external person entering both houses and killing the victim,
also the \sciffprob\ proof-procedure was able to produce such hypothesis. And as a human brain judged such hypothesis unrealistic,
also the proof-procedure assigned it a low probability.
Of course, this is a simplification of all the evidence collected during the trial, that we cannot report here due to lack of space.
%\footnote{The guilty verdict was also decided by the presence of a surveillance camera footage showing the defendant entering the house.
%\mg{Perch\'e aggiungiamo questa nota? Mi d\`a l'idea che se mettiamo questa ulteriore informazione dobbiamo poi codificarla nel nostro esempio.
%Altrimenti nella nota possiamo mettere qualcosa tipo ``Of course this is a simplification of all the evidence that were collected during the trial, that we cannot report here due to lack of space.''}
%}.

It is also interesting to vary the probability $q$ associated with \ref{eq:exampleMurder_IC_Prob} to see when  $\SetWorldSet_2$ becomes the most probable explanation. %Let $q$ be the probability associated with \ref{eq:exampleMurder_IC_Prob}. 
Now the probabilities are:
$P(E_1)=q^2$,
$P(E_2)=q\cdot(1-q) $,
$P(E_3)=q\cdot(1-q)$ 
and
$P(E_4)=(1-q)^2$.
In order to have $P(E_1)+P(E_2)+P(E_3) < P(E_4)$,
%%% MG: RIPORTATO I CALCOLI, PER CONTROLLO:
%$$q^2+2q(1-q) < (1-q)^2$$
%$$q^2+2q-2q^2 < 1+q^2-2q$$
%$$2q-2q^2 < 1-2q$$
%$$0 < 2q^2 -4q+1$$
%$$q< \frac{4 - \sqrt{4^2-4\cdot 2}}{2\cdot2} \qquad \bigvee \qquad q> \frac{4 + \sqrt{4^2-4\cdot 2}}{2\cdot2}$$
%$$q< \frac{4 - \sqrt{8}}{2\cdot2} \qquad \bigvee \qquad q> \frac{4 + \sqrt{8}}{2\cdot2}$$
%$$q< \frac{4 - 2\sqrt{2}}{2\cdot2} \qquad \bigvee \qquad q> \frac{4 + 2\sqrt{2}}{2\cdot2}$$
%$$q< \frac{2 - \sqrt{2}}{2} \qquad \bigvee \qquad q> \frac{2 + \sqrt{2}}{2}$$
%$$q< 1-\frac{\sqrt{2}}{2} \qquad \bigvee \qquad q> 1 +\frac{ \sqrt{2}}{2}$$
%The second is $>1$ so it is not a probability
%%%%
it must be that $q< 1-\frac{\sqrt{2}}{2}\simeq 0.293$.
\end{example}

\begin{example}

%Consider the following example, %which is inspired by the power supply network diagnosis example  of \cite{DBLP:series/lncs/Christiansen08}.
%Consider the power supply network diagnosis example  of \cite{DBLP:series/lncs/Christiansen08}
%simply rewritten in \sciffprob\ syntax.
% Please refer to the figure of the network in Section 7.1 of the aforementioned paper. %\eb{la figura per ora non è inserita per motivi di spazio}
\citeN{DBLP:series/lncs/Christiansen08} proposes an example of power supply network diagnosis (here adapted to \sciffprob\ syntax);
a power plant \verb|pp| provides electricity to a few villages \verb|v|$_i$ through directed wires \verb|w|$_i$.
The network structure is described by a set of \verb|edge/3| facts (please refer to the figure of the network in Section 7.1 in \cite{DBLP:series/lncs/Christiansen08}):
\begin{lstlisting}
edge(w1, pp, n1).    edge(w4, n3, v3).    edge(w7, n3, v2).
edge(w2, n1, n2).    edge(w5, n1, n4).    edge(w8, n4, v4).
edge(w3, n2, n3).    edge(w6, n2, v1).    edge(w9, n4, v5).
\end{lstlisting}
%where \verb|pp| is the power plant and \verb|w|$_i$ are directed wires connecting nodes \verb|n|$_i$, which may lead electricity to a few villages \verb|v|$_i$. 

%non viene usato
%The fact that a given point (node or village) in the network has electricity, is described as:
%\begin{verbatim}
%haspower(pp):- abd(up(pp),1).
%haspower(N2):- edge(W,N1,N2), abd(up(W),1), haspower(N1). 
%\end{verbatim}

The fact that a given point (the power plant, a node or a village) in the network has no electricity is described in \cite{DBLP:series/lncs/Christiansen08} by means of the \verb|hasnopower/1| predicate. 
%\mg{Mi chiedo se conviene mettere abd(down(pp)), oppure semplicemente down(pp), oppure anche $\abd{down}(PP)$ (se vogliamo identificare con un font diverso gli abducibili).
%abd \`e la sintassi dell'implementazione e, in questo paper, il nome del vincolo CHR.
%Se facciamo riferimento alla sintassi dell'implementazione, dovremmo descriverla (ma  sinceramente non ne vedo il motivo).
%Se facciamo riferimento alla sintassi del vincolo CHR, allora anche gli IC andrebbero scritti come {\tt ic(Body,Head)}
%invece di $Body \rightarrow Head$.
%}
\begin{lstlisting}
hasnopower(pp) $\leftarrow$ $\cabd{down}{(pp)}$. 
hasnopower(N2) $\leftarrow$ edge(W,_,N2), $\cabd{down}{(W)}$.
hasnopower(N2) $\leftarrow$ edge(_,N1,N2), hasnopower(N1).
\end{lstlisting}
while the opposite situation is described by:
\begin{lstlisting}
haspower(pp) $\leftarrow$ $\cabd{up}{(pp)}.$
haspower(N2) $\leftarrow$ edge(W,N1,N2), $\cabd{up}{(W)}$, haspower(N1). 
\end{lstlisting}

%The logic programming 
The clauses and facts  mentioned above represent the \SOKB\ of the \sciffprob\ program.
In the proof-procedure  by \citeN{DBLP:series/lncs/Christiansen08}, probabilities are associated to abducible atoms, and he defines
the abducibles \cabd{up}{/1} with probabilities 0.9 (every instance of $\cabd{up}(X)$ has the same probability) and \cabd{down}{/1}  with 0.1.
As noted earlier, in \sciffprob\ we can use \acp{PIC} to associate probabilities to single abducibles, using the complementary probability:
\begin{lstlisting}
$ic_1$ = 0.1::$\cabd{up}{(X)}$ $\rightarrow$ false.
$ic_2$ = 0.9::$\cabd{down}{(X)}$ $\rightarrow$ false.
\end{lstlisting}
Finally, the program in \cite{DBLP:series/lncs/Christiansen08} also includes one \ac{IC} to state that no node may be up and down at the same time:
\begin{lstlisting}
$ic_3$ = $\cabd{up}{(X)}$ $\wedge$ $\cabd{down}{(X)}$ $\rightarrow$ false.
\end{lstlisting}
Here $\icsetprob=\{ic_1,ic_2\}$,  $\icsetnonprob=\{ic_3\}$ and the set of abducible predicates is $\AbducibleSet =\{\cabd{up}{/1}, \cabd{down}{/1}\}$.

%In the following we present two real world applications in the electrical and legal fields of probabilistic non-ground abduction. %\eb{Marco, devo mostrare una mappatura interna in CHR come fa Christiansen in formula 57?}
%%%MG: Direi di no, lui fa una compilation, noi no
Given the goal:
%$G$ = hasnopower(v1) $\wedge$ hasnopower(v2) $\wedge$ hasnopower(v3) $\wedge$
    %hasnopower(v4) $\wedge$ hasnopower(v5)
\begin{lstlisting}
    $G$ = hasnopower(v1),hasnopower(v2),hasnopower(v3),
        hasnopower(v4),hasnopower(v5)
\end{lstlisting}
$P(G) = 0.199695$ is returned as the probability that no village has electricity, as 
the sum of the probability of the worlds where the goal is true, throughout the application of function \textsc{Prob} of Fig.~\ref{prob}. In total, 1600 worlds are found, corresponding to the different combinations of failures of wires/power plant.
%The answer also includes the list of the worlds contributing to $P(G)$ and their associated probability; each world specifies the list of probabilistic ICs $ic_i$ that have been included or not in that world with the corresponding substitution $\theta_j$. 
As also computed by \citeN{DBLP:series/lncs/Christiansen08}, the two most probable worlds have both probability $0.1$ and are respectively identified by the explanations:
\begin{description}
\item[$E_1 %= E_1^+ \cup E_1^- = E_1^- 
= \{(ic_2, \texttt{X/pp},0)\}$] having probability $P(E_1)=(1-0.9)=0.1$; the complement of $0.9$ is taken as $ic_2$ with substitution $\theta_1=\{\texttt{X/pp}\}$ (i.e., \texttt{$\cabd{down}{(pp)}$ $\rightarrow$ false.}) is not included ($k=0$). The probabilistic abductive answer is  $(\Delta,\theta) = (\{\cabd{down}{(pp)}\}, \theta_1)$;
%ics(removed,[abd(down(pp),1)],[],2,0.9)
\item[$E_2 %= E_2^+ \cup E_2^- =  E_2^- 
= \{(ic_2, \texttt{X/w1},0)\}$] having probability $P(E_2)=(1-0.9)=0.1$; here $ic_2$ with substitution $\theta_2=\{\texttt{X/w1}\}$  is not included. The probabilistic abductive answer is  $(\Delta,\theta) = (\{\cabd{down}{(w1)}\}, \theta_2)$.
\end{description}
%\rz{Qua adesso è corretto con la definizione di $E^+$ e $E^-$ data. Se cambia va modificato}
%ics(removed,[abd(down(w1),1)],[],2,0.9)
%Prob: 0.100000
$E_1$ indicates that  the goal is true (no village receives power)  if the power plant is down with probability 0.1, while $E_2$ %that no village receives power  
if the main wire (\texttt{w1}) is down with probability 0.1.
% For instance, the world identified by the explanation
%$E^-=E_3= \{(ic_2,X/w_5,0),(ic_2,X/pp,0)\}$ has probability $0.1\times0.1=0.01$.
%ics(removed,[abd(down(w5),1)],[],2,0.9)
%ics(removed,[abd(down(pp),1)],[],2,0.9)
%Prob: 0.010000
\end{example}

\section{Experiments}
\label{sec:exp}
To test how our approach reacts with an increasing number of worlds, given an integer $n \geq 1$, we considered the \ac{ALP} containing the following ICs for $1\leq i \leq n$:
\begin{lstlisting}
	0.6::$\texttt{\textit{b}}_{i-1}$(X)  $\rightarrow$ $\texttt{\textit{p}}_i$(X) $\wedge$ $\texttt{\textit{q}}_i$(X).
	0.6::$\texttt{\textit{p}}_i$(X) $\rightarrow$ $\texttt{\textit{b}}_{i}$(X).
	0.6::$\texttt{\textit{q}}_i$(X) $\rightarrow$ $\texttt{\textit{b}}_{i}$(X).
\end{lstlisting}
where $\texttt{\textit{b}}_{i}\texttt{(X)}$, $\texttt{\textit{p}}_{i}\texttt{(X)}$, and $\texttt{\textit{q}}_{i}\texttt{(X)}$ are abducibles for all $i$.
 For the test purpose, we built 12 \acp{ALP} of increasing size containing the above ICs for $i$ varying from 1 to $n$, with $n\in\{1,2,3,\dots,11,12\}$.
 
These \acp{ALP} present a number of worlds that grows exponentially with $n$. This is due to the fact that $\texttt{\textit{p}}_{i+1}\texttt{(X)}$ and $\texttt{\textit{q}}_{i+1}\texttt{(X)}$ are needed to abduce $\texttt{\textit{b}}_i\texttt{(X)}$. $\texttt{\textit{p}}_{i+1}\texttt{(X)}$ and $\texttt{\textit{q}}_{i+1}\texttt{(X)}$ can either both be true, or one true while the other must be abduced, or both must be abduced. In turn, to
abduce $\texttt{\textit{p}}_{i+1}\texttt{(X)}$ or $\texttt{\textit{q}}_{i+1}\texttt{(X)}$, one must use $\texttt{\textit{b}}_{i+1}\texttt{(X)}$, which may be already abduced or may possibly have to be abduced. These possible cases must be considered for all values assumed by $i$, creating an exponential number of possible ways to prove $\texttt{\textit{b}}_i\texttt{(X)}$.

For the test, we  compute the probability of the goal $G=\texttt{\textit{b}}_0\texttt{(X)}$. To compute the running time, we ran the \sciffprob\ proof procedure 5 times w.r.t. each \ac{ALP} built and we computed the average running time with its standard deviation. 
All the tests have been performed on a Linux machine  equipped with IBM\textsuperscript{\textcopyright} POWER9\textsuperscript{TM} AC922 at 2.6(3.1)GHz, with 256 GB of RAM. We allowed a maximum of 100 GB for the Prolog stack, necessary to hold the worlds and the information about the choice points for the \acp{ALP} with $i \geq 11$.

\begin{table}
	\centering
	\caption{Average running time (in seconds) on 5 executions of the \sciffprob\ proof-procedure and the $\pm$ the standard deviation. ALP size indicates the value $n$ takes in each ALP, i.e., it defines the range $1\leq i \leq n$. OM means Out of Memory.\label{tab:test-res}}% The avg. running time is shown in the Appendix (Table~\ref{tab:test-ext-res}).}
	\footnotesize
	\begin{tabular}{c|c|c}
		ALP  size ($n$) & N. Worlds & Time (s) $\pm$ Std. dev. \\ %\hline\hline
		         1           &     5     &   $0.00133\pm0.00003$    \\
		         2           &    17     &   $0.00387\pm0.00001$    \\
		         3           &    53     &   $0.01228\pm0.00003$    \\
		         4           &    161    &   $0.04042\pm0.00013$    \\
		         5           &    485    &   $0.13582\pm0.00059$    \\
		         6           &   1 457   &   $0.45903\pm0.45903$    \\
		         7           &   4 373   &   $1.91763\pm0.01396$    \\
		         8           &  13 121   &   $6.32130\pm0.02730$    \\
		         9           &  39 365   &   $21.51733\pm0.06072$   \\
		         10          &  118 097  &   $73.78656\pm0.29424$   \\
		         11          &  354 293  &  $333.90058\pm0.78041$   \\
		         12          & 1 062 881 &            OM            \\ %\hline
	\end{tabular} 
\end{table}

Table~\ref{tab:test-res} shows, for each \ac{ALP}, the number of found worlds and the average running time in seconds to compute the probability of the goal $\pm$ the standard deviation. The value OM represents the fact that the proof-procedure went out of the available stack due to the high number of worlds and choice points to maintain in memory.
As one can see, our approach is able to manage very high number of worlds within 1 minute.

\section{Related work}
\label{sec:rw}

Some works explicitly addressed probabilistic abductive reasoning: \citeN{TurliucEtAl2013}  rank explanations in terms of probabilities and investigate the role of integrity constraints. They define a probability distribution over the truth values of each
(ground) abducible, while we set probabilities on integrity constraints.
SOLAR~\cite{DBLP:conf/ijcai/InoueSIKN09} is a system for abductive inference that applies an Expectation Maximization (EM) algorithm  for evaluating  hypotheses obtained from the process of hypothesis generation. 
After generating all minimal explanations, the EM algorithm, working on \acp{BDD}, is used to assign probabilities to atoms in explanations.
Finally, SOLAR computes the probability of each hypothesis to find the most probable one.
EM is also used by~\citeN{conf/ijcai/Raghavan11}, who considers Bayesian Logic Programs (BLPs), and by \citeN{kate2009rj}, who consider Markov Logic Networks (MLNs).  EM is exploited to learn the parameters associated with the model. 

Differently, \citeN{DBLP:journals/ai/Poole93} considers Bayesian networks but focuses on the definition of the language instead of the combination of abductive proof-procedures and statistical learning. Moreover, it imposes assumptions on the type of constraints in order to simplify the procedure. 

\citeN{arvanitis2006abduction} consider Stochastic Logic Programs (SLPs), where abductive reasoning is done by reversing  deduction, i.e., reversing the flow of the proof-procedure. However, this may return wrong conclusions without imposing \textit{ad hoc} constraints in the program.

\citeN{DBLP:conf/cilc/RotellaF13} define new types of probabilistic constraints to guide the search of explanations that are consistent with the constraints, by giving priority to explanations having higher probability to be true.
However, all these approaches do not allow non-ground abduction.
We refer to~\cite{PILP} for a description of BLPs, MLNs, and SLPs.

Abduction is also used in machine learning. For example in \cite{Muggleton2015}, it is used to perform predicate invention and recursive generalisations with respect to a meta-interpreter. In this case, however, values for predicate variables  rather than values for first order variables are abduced.
More recently, \cite{Muggleton2021} proposed to use abduction to infer constraints for learning problems but do not consider existential variables.

\citeN{DBLP:series/lncs/Christiansen08} also implements probabilistic non-ground abduction in \ac{CHR}; the main difference with our work
is that in his work probabilities are associated with abducibles, while in our work they are associated with integrity constraints.
The integrity constraints in his proof-procedure are more limited in syntax, since they can only accommodate abducible predicates and are in the form of denials, while in \sciff\ they can include all types of atoms, and are
in the form of implications, with disjunctions in the head.
Due to the syntactic restrictions on integrity constraints, only a limited form of negation is possible, while in IFF sound negation can be applied to
both abducible and defined atoms.
Differently from our solution, Christiansen adopts a best-first search scheme, in which branches with higher probability are explored before the branches
with lower probability; while the exact probability is only known at the end of the whole search, at each found solution a lower bound is obtained, and it gets more precise as new solutions are found.
On the other hand, a best-first search has higher memory requirements than a depth-first search.

\section{Conclusions}
\label{sec:conc}

We presented a probabilistic abductive logic programming language able to perform abductive reasoning with variables,
and  probabilities attached to constraints.
The need to have probabilistic integrity constraints comes from probabilistic reasoning with many real-life applications, and such integrity constraints may be learned from available data \cite{RigBelZesAlbLam20-ML-IJ}. 
%Consider, e.g., the study of a disease, where it is known the anamnesis of many patients that have or have not the disease. The conditions (constraints) under which the disease is present may not be certain.
We showed two examples on different domains of abductive reasoning with probabilities in action, showing the usefulness of non-ground abduction and that
our language can also tackle   problems with probabilities attached to abducibles.
Soundness and completeness of the devised proof-procedure have been shown.

Future work concerns considering non-ground probabilities, i.e., variable probabilities attached to integrity constraints and its CHR implementation, as well as probabilistic clauses in the \SOKB, and \ac{CLP} constraints.

\bibliographystyle{acmtrans}

\label{lastpage}
\end{document}